\documentclass[letterpaper]{article} 
\usepackage{aaai23}  
\usepackage{times}  
\usepackage{helvet}  
\usepackage{courier}  
\usepackage[hyphens]{url}  
\usepackage{graphicx} 
\urlstyle{rm} 
\usepackage{natbib}  

\usepackage{caption} 
\frenchspacing  
\setlength{\pdfpagewidth}{8.5in}  
\setlength{\pdfpageheight}{11in}  
%
\usepackage{algorithm}
\usepackage{algorithmic}

%
\usepackage{newfloat}
\usepackage{listings}
\DeclareCaptionStyle{ruled}{labelfont=normalfont,labelsep=colon,strut=off} 
\lstset{%
	basicstyle={\footnotesize\ttfamily},
	numbers=left,numberstyle=\footnotesize,xleftmargin=2em,
	aboveskip=0pt,belowskip=0pt,%
	showstringspaces=false,tabsize=2,breaklines=true}
\floatstyle{ruled}
\newfloat{listing}{tb}{lst}{}
\floatname{listing}{Listing}
%
\pdfinfo{
/TemplateVersion (2023.1)
}

\usepackage{algorithm}
\usepackage{algorithmic}
\usepackage{booktabs}
\usepackage{amsmath}
\usepackage{amssymb}
\usepackage{mathtools}
\usepackage{amsthm}

\newtheorem{proposition}{Proposition}

\newtheorem{definition}{Definition}
\usepackage{enumitem}

\usepackage{bm}
\usepackage{longtable}
\usepackage{array}
\usepackage{multirow}
\usepackage{diagbox}
\usepackage{subcaption}

\setcounter{secnumdepth}{0} 

%


\title{Evaluating Model-free Reinforcement Learning toward Safety-critical Tasks}
\author{
    Linrui Zhang\textsuperscript{\rm 1}\thanks{This work was done during Linrui Zhang's internship at JD Explore Academy, JD.com, Inc.}, 
    Qin Zhang\textsuperscript{\rm 1}, 
    Li Shen\textsuperscript{\rm 2}$^\dag$\footnote{Correspondence to: Li Shen and Xueqian Wang},  
    Bo Yuan\textsuperscript{\rm 3}, 
    Xueqian Wang\textsuperscript{\rm 1}$^\dag$, 
    Dacheng Tao\textsuperscript{\rm 2}
}

\affiliations{
    \textsuperscript{\rm 1} Tsinghua Shenzhen International Graduate School, Tsinghua University\\
    \textsuperscript{\rm 2} JD Explore Academy, JD.com, Inc.\\
    \textsuperscript{\rm 3} Qianyuan Institute of Sciences\\
    \{zlr20,zhangqin21\}@mails.tsinghua.edu.cn, \{mathshenli,dacheng.tao\}@gmail.com, \\
    boyuan@ieee.org, wang.xq@sz.tsinghua.edu.cn


%
}

\usepackage{bibentry}

\begin{document}

\maketitle

\begin{abstract}
Safety comes first in many real-world applications involving autonomous agents. Despite a large number of reinforcement learning (RL) methods focusing on safety-critical tasks, there is still a lack of high-quality evaluation of those algorithms that adheres to safety constraints at each decision step under complex and unknown dynamics. In this paper, we revisit prior work in this scope from the perspective of state-wise safe RL and categorize them as projection-based, recovery-based, and optimization-based approaches, respectively.  Furthermore, we propose Unrolling Safety Layer (USL), a joint method that combines safety optimization and safety projection. This novel technique explicitly enforces hard constraints via the deep unrolling architecture and enjoys structural advantages in navigating the trade-off between reward improvement and constraint satisfaction. To facilitate further research in this area,  we reproduce related algorithms in a unified pipeline and incorporate them into SafeRL-Kit, a toolkit that provides off-the-shelf interfaces and evaluation utilities for safety-critical tasks. We then perform a comparative study of the involved algorithms on six benchmarks ranging from robotic control to autonomous driving. The empirical results provide an insight into their applicability and robustness in learning zero-cost-return policies without task-dependent handcrafting. The project page is available at \url{https://sites.google.com/view/saferlkit}.
\end{abstract}

\section{Introduction}
Model-free reinforcement learning (RL) has achieved superhuman performance on many decision-making problems~\citep{mnih2015human,vinyals2019grandmaster}. Typically, the agent learns from trial and error and requires minimal prior knowledge of the environment. Such a paradigm features significant advantages in mastering essential skills for complex systems, but concerns about the systematic safety limit the extent of adoption of model-free RL in real-world applications, such as human-robot collaboration~\citep{villani2018survey} and autonomous driving~\citep{kiran2021deep}.

Penalizing unsafe transitions via the reward function is straightforward but sometimes cumbersome to navigate the trade-off between performance and safety. A trivial penalty term may fail to obtain a risk-averse policy, whereas an excessive punishment may make the agent too conservative to explore the environment. Alternatively, incorporating safety into RL via constraints~\cite{altman1999constrained} is  widely adopted since the strength of constraints reflects the human-specified safety requirement, and the agent is desired to optimize its behavior within the constrained policy search space. 

In this paper, we explore model-free reinforcement learning methods that adhere to state-wise safety constraints. To better understand how this study is developed, two points deserve further clarification. First, our work is aimed at learning a stationary safe policy under the general model-free settings, instead of refusing any safety violations during the training. The latter is more related to optimal control and relies on the known dynamic model~\cite{cheng2019end} or a carefully designed energy function~\cite{zhao2021model}. Second, we focus on the state-wise constraint at every decision-making step and demonstrate that this type of constraint is more strict and practical toward safety-critical tasks theoretically and empirically. Our contributions in this paper are summarized as follows:

\begin{enumerate}
\item We revisit model-free RL following state-wise safety constraints and present SafeRL-Kit, a toolkit that implements prior work in this scope under a unified off-policy framework. Specifically, SafeRL-Kit contains projection-based Safety Layer~\cite{dalal2018safe}, recovery-based Recovery RL~\cite{thananjeyan2021recovery}, optimization-based Off-policy Lagrangian~\cite{ha2020learning}, Feasible Actor-Critic~\cite{ma2021feasible}, and the new method proposed in this paper.
\item We propose Unrolling Safety Layer (USL), a novel approach that combines safety projection and safety optimization. USL unrolls gradient-based corrections to the jointly optimized actor-network and thus explicitly enforces the constraints. The proposed method is simple-yet-effective and outperforms state-of-the-art algorithms in learning risk-averse policies. 
\item We perform a comparative study based on SafeRL-Kit and evaluate the related algorithms on six different tasks. We further demonstrate their applicability and robustness in safety-critical tasks  with the universal binary cost indicator and a constant constraint threshold.
\end{enumerate}

\section{Related Work}
\paragraph{Safe RL Algorithms.}
Safe RL optimizes policies under episodic or instantaneous constraints. 
The most common approach to solving the episodic constraint is Lagrangian relaxation (Chow et al. 2017; Tessler at al. 2017; Stooke et al. 2020). Other works~\citep{achiam2017constrained,yang2020projection} approximate the constrained policy iteration with a quadratic constrained optimization. Recently, first-order methods~\cite{liu2020ipo,ijcai2022-520,yang2022constrained} start to gain attractions as the objective is efficient to optimize and easy to handle multiple constraints.
For the instantaneous constraint, Lagrangian relaxation is also a feasible solution~\cite{bohez2019value}. \citet{dalal2018safe} perform quadratic programming to project actions back to the safe set. Other works model the instantaneous cost as Gaussian Processes and plan in the safety-proven neighboring states~\citep{wachi2018safe,wachi2020safe}. 
In this paper, we formulate safe RL following state-wise safety constraints, which are slightly different from the above genres.

\paragraph{Safe RL Benchmarks.}
There have already been some safety-critical benchmarks to evaluate the efficacy of safe RL methods, including traditional MuJoCo tasks~\cite{achiam2017constrained,zhang2020first}, navigation in the cluttered environment~\cite{ray2019benchmarking}, safe robotic control task~\cite{yuan2021safe} and safe autonomous driving~\cite{li2021metadrive,herman2021learn}. However, a comprehensive study on learning a zero-cost-return policy with model-free methods is still absent.

\section{Preliminaries}
This study lies in the context of constrained Markov Decision Process (CMDP)~\citep{altman1999constrained}, which extends standard MDP~\citep{sutton2018reinforcement} as a tuple $(\mathcal{S},\mathcal{A},\mathcal{P},\mathcal{R},\mathcal{C},\mu,\gamma)$. ${\mathcal{S}}$ and ${\mathcal{A}}$ denote the state space and the action space, respectively. ${\mathcal{P}}: {\mathcal{S}} \times \mathcal{A} \times  \mathcal{S} \mapsto [0,1]$ is the transition  probability  function to describe the dynamics of the system.  $\mathcal{R} : \mathcal{S} \times \mathcal{A} \mapsto \mathbb{R}$ is the reward function. $\mathcal{C} : \mathcal{S} \times \mathcal{A} \mapsto [0,+\infty] $ is the cost function and  reflects the safety violation. $\mu:\mathcal{S} \mapsto [0,1]$ is the initial state distribution, and $\gamma$ is the discount factor for future reward and cost.
A stationary policy $\pi : S \mapsto P(A)$ maps the given states to probability distributions over action space and the expected discounted return of the policy is $J_R(\pi) = \mathop{\mathbb{E}}_{\tau\sim \pi}\big [ \sum^\infty_{t=0}\gamma^t R(s_t,a_t)\big ],$ where $\tau\sim\pi$ accounts for the stochastic trajectory distribution sampled on $s_0 \sim \mu, a_t \sim \pi(\cdot | s_t), s_{t+1} \sim P(\cdot | s_t,a_t)$. The goal of safe RL is to find the optimal policy: 
\begin{equation}
{\max}_\pi J_R(\pi)\quad \mathrm{s.t.}\ \ \pi \text{ is feasible}.
\end{equation}

In a CMDP, the agent is typically constrained by the cost function in two ways. One is the \emph{Episodic Constraint}. This type of formulation requires the cost-return in the whole trajectory be within a certain threshold, namely $J_C(\pi) = \mathop{\mathbb{E}}_{\tau\sim \pi}\big [ \sum^\infty_{t=0}\gamma^t c_t\big ] \leq d$, which is suitable for total energy consumption, resource over-utilization, etc. The other is the \emph{Instantaneous Constraint}. This type of formulation requires the selected actions to enforce the constraint at every decision-making step, namely $\forall t,  \mathop{\mathbb{E}}_{\pi}[c_t|s_t] \leq \epsilon$, which is indispensable in accident and damage avoidance.

\section{Revisit RL toward Safety-critical Tasks}

In many safety-critical scenarios, the final policy is assumed to maintain the zero-cost return since any inadmissible behavior could lead to catastrophic failure in the execution. Prior constrained learning paradigms have fatal flaws under this premise. For the episode constraint, with a threshold $d$ close to $0$, the agent often either fails to improve policy or receives a cost-return more significant than $0$. The underlying reason is that such a constraint is easy to violate, especially when the time horizon contains thousands of steps. If the algorithm handles the episodic formula directly, it is cumbersome to identify ill actions as well as tweak the policy parameters and the corresponding action sequence. For another, the instantaneous constraint is tighter since it is a sufficient but not necessary condition for the episodic constraint with $\epsilon = (1-\gamma)d$. Nevertheless, it is problematic to directly enforce $C(s_t,a_t)\leq \epsilon$ at each single decision-making step in complicated dynamics, since some actions have a long-term effect on future visited states and the infeasible states might also be intractable to recover in a single time-step. 
A more reasonable solution is to prevent the current state from falling into the unsafe set in a certain planning span, which is inspired by model predictive control. Consequently, we define the long-term return for cost as $ Q^\pi_c(s_t,a_t) ={\mathbb{E}}_{\pi} \big [ \sum^\infty_{t'=t}\gamma^t c_{t'} | s_t, a_t\big]$, which is similar to $Q^\pi(s,a)$ for reward in standard RL by substituting $r_t$ with $c_t$. Next, we present 
the formal definitions related to state-wise safe reinforcement learning.
\begin{definition}[State-wise safety constraints] In the whole trajectory, the agent is required to adhere to the following long-term constraint at every visited state
\begin{equation}
    Q^\pi_c(s_t,a_t) \leq \delta, \forall t \geq 0.
    \label{sw-constraints}
\end{equation}
\end{definition}
\begin{definition}[Optimal state-wise safe policy] In any feasible state, the optimal action is
\begin{equation}
    a^* = \mathop{\arg\max}_{a} \big[Q^\pi(s,a) \big] \quad \mathrm{s.t.} \ \ Q^\pi_c(s,a) \leq \delta,
    \label{infer}
\end{equation}
\end{definition}

Using the cumulative constraint to enhance state-wise safety is not novel~\cite{srinivasan2020learning,ma2021feasible,yu2022reachability}. Nevertheless, the choice of $\delta$ is tricky since the relationship between~\eqref{sw-constraints} and the desired 
instantaneous constraint is not straightforward. In this paper, we give a theoretical bound of cost limit $\delta$ as follows. 

\begin{proposition}\label{bound}
 If $\delta \leq \epsilon\cdot\gamma^T $,  any policy $\pi(\cdot|s)$  satisfying \eqref{sw-constraints} fulfills $\mathop{\mathbb{E}}_{\pi}[c_t|s_t] \leq \epsilon$ within the planning span $T$.
\end{proposition}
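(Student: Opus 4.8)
The plan is to turn the single scalar inequality \eqref{sw-constraints} into a per-step bound, using the one structural fact available for free: the cost function is non-negative, $\mathcal{C}\colon\mathcal{S}\times\mathcal{A}\mapsto[0,+\infty]$. Since every $c_{t'}\ge 0$, the discounted cost-return $Q^\pi_c$ dominates each of its individual summands, so requiring $Q^\pi_c$ to be small forces each term — in particular those indexed within the planning span $T$ — to be small as well.

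Concretely, I would fix an arbitrary visited state $s_t$, draw $a_t\sim\pi(\cdot\mid s_t)$, and take any index $j$ with $0\le j\le T$. Keeping only the $t'=t+j$ summand in the definition of $Q^\pi_c$ gives
\[
\delta\ \ge\ Q^\pi_c(s_t,a_t)\ =\ \mathbb{E}_{\pi}\Big[\textstyle\sum_{t'\ge t}\gamma^{t'-t}c_{t'}\ \Big|\ s_t,a_t\Big]\ \ge\ \gamma^{j}\,\mathbb{E}_{\pi}\!\big[c_{t+j}\mid s_t,a_t\big],
\]
where the first inequality is \eqref{sw-constraints} and the last uses $c_{t'}\ge 0$. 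Hence $\mathbb{E}_{\pi}[c_{t+j}\mid s_t,a_t]\le\delta/\gamma^{j}$. Because $0<\gamma\le 1$ and $j\le T$, we have $\gamma^{j}\ge\gamma^{T}$, so $\delta/\gamma^{j}\le\delta/\gamma^{T}\le\epsilon$ by the hypothesis $\delta\le\epsilon\cdot\gamma^{T}$. Averaging over $a_t\sim\pi(\cdot\mid s_t)$ then yields $\mathbb{E}_{\pi}[c_{t+j}\mid s_t]\le\epsilon$; since $s_t$ and $j\le T$ were arbitrary, this is precisely the instantaneous guarantee throughout the planning span $T$, and the special case $j=0$ recovers $\mathbb{E}_{\pi}[c_t\mid s_t]\le\delta\le\epsilon$.

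There is no genuine obstacle here — the argument is essentially two lines of term-wise domination. The only points worth stating carefully are (i) that non-negativity of $\mathcal{C}$ is exactly what licenses discarding the tail of the series, and (ii) that the discount bookkeeping runs the right way, $\gamma^{j}\ge\gamma^{T}$ for $j\le T$ since $\gamma\le 1$, which is where the $\gamma^{T}$ slack in the threshold is consumed. If one prefers not to condition on the action, the identical chain goes through with the constraint taken in the marginalized form $\mathbb{E}_{a_t\sim\pi}[Q^\pi_c(s_t,a_t)]\le\delta$.
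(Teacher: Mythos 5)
Your proof is correct and is essentially the paper's argument in contrapositive form: the paper assumes some step $T'<T$ has $\mathbb{E}_\pi[c_{t+T'}\mid s_t]>\epsilon$ and derives $Q^\pi_c(s_t,a_t)\ge\epsilon\cdot\gamma^{T'}>\epsilon\cdot\gamma^{T}\ge\delta$, which is exactly your term-wise domination $\delta\ge Q^\pi_c\ge\gamma^{j}\,\mathbb{E}_\pi[c_{t+j}\mid s_t,a_t]$ read in reverse. Same key facts (non-negativity of the cost and $\gamma^{j}\ge\gamma^{T}$ for $j\le T$), just stated directly rather than by contradiction.
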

\begin{proof}
It is proved by contradiction that if $\mathbb{E}_{\pi}\big[c_t|s_t] > \epsilon$ in any step $T' < T$, then $V^\pi_c(s_t)\geq \epsilon\cdot\gamma_c^{T'} >  \epsilon\cdot\gamma_c^T \geq \delta$.
\end{proof}

The above proposition, to some extent, alleviates the concern that decreasing $\delta$ leads to overly conservative policy. For example, if a racing car is able to slow down and avoid the obstacle in 20 steps ahead, setting $T > 20$ will not change the optimal sequence for $a^*_{t-t'}$, where $t' > 20$.
In our experiments, we keep $\delta=0.1$ across different safety-critical tasks, which equals the safe planning span of at least 100 steps with a universal binary cost indicator. Empirical results demonstrate that safe RL methods adhering to state-wise safety constraints are robust to this value.

In this paper, we explore model-free RL that adheres to state-wise safety constraints in continuous state-action spaces and unknown dynamics. We classify the most related approaches into the following three categories:

\paragraph{Safety Correction.}
This type of methods corrects the initial unsafe decision by projecting it back to the safe set. The projection can be constructed by the control barrier function~\citep{cheng2019end} with known dynamics, implicit safety index~\citep{zhao2021model} with hand-crafted energy function, or parametric linear model~\cite{dalal2018safe}  learned from past experiences. However, those approaches are sometimes under-performed regarding cumulative rewards since the correction only guarantees the feasibility but lacks the equivalence with optimality.

\paragraph{Safety Recovery.}

This type of methods is especially welcomed in the field of robotics~\cite{thananjeyan2021recovery,yang2022safe} and autonomous driving~\cite{chen2021safe}. The critical idea behind Recovery RL is introducing a dedicated policy that recovers unsafe states, whereas the task policy is trained by the standard RL to achieve the original goal. However, those approaches struggle for a rational recovery policy since it tends to be overly conservative in preventing risky exploration. Furthermore, the decisions between two policies may conflict with each other, which makes the agents stuck near the boundaries of safe regions easily.

\paragraph{Safety Optimization.}
This type of methods incorporates safety constraints into the RL objective and yields a constrained sequential optimization task.  
These approaches employ different optimization objectives to guide the updates of parametric policies, which can be tackled by Lagrangian relaxation~\cite{ha2020learning,ma2021feasible} or the penalty method~\cite{ijcai2022-520}. Unfortunately, the ``soft" loss function in the sample-based learning does not consistently enforce the ``hard" constraint in practice and barely leads to zero-cost-return policies even at convergence.

\section{Unrolling Safety Layer: A Novel Approach}
Orthogonal to existing algorithms, we propose a novel approach referred to as \textbf{U}nrolling \textbf{S}afety \textbf{L}ayer (USL) in this paper, which is inspired by the complementary of safety projection and safety optimization. For projection-based approaches, the correction (even if tractable) only enforces the feasibility but lacks the equivalence to the optimal maximum return. For optimization-based approaches, most of them tend to find the optimal solution to the constrained problem with the help of neural networks. However, the forward computing lacks explicit restrictions on the output actions, and thus the ``soft" loss function often fails to fully satisfy ``hard" constraints.
Recently, Deep Constraint Completion and Correction (DC3)~\cite{donti2021dc3} shows potential to achieve optimal objective values while preserving feasibility, which is of independent interest to general constrained problems. As illustrated in Figure~\ref{fig:usl}, we employ a similar joint architecture that combines safety optimization (serves as the first stage's approximate solver) and safety projection (serves as the second stage's iterative correction). To the best of our knowledge, this is the first study to introduce deep unrolling optimization into safe RL.

\begin{figure}
      \centering
        \includegraphics[width=1\linewidth]{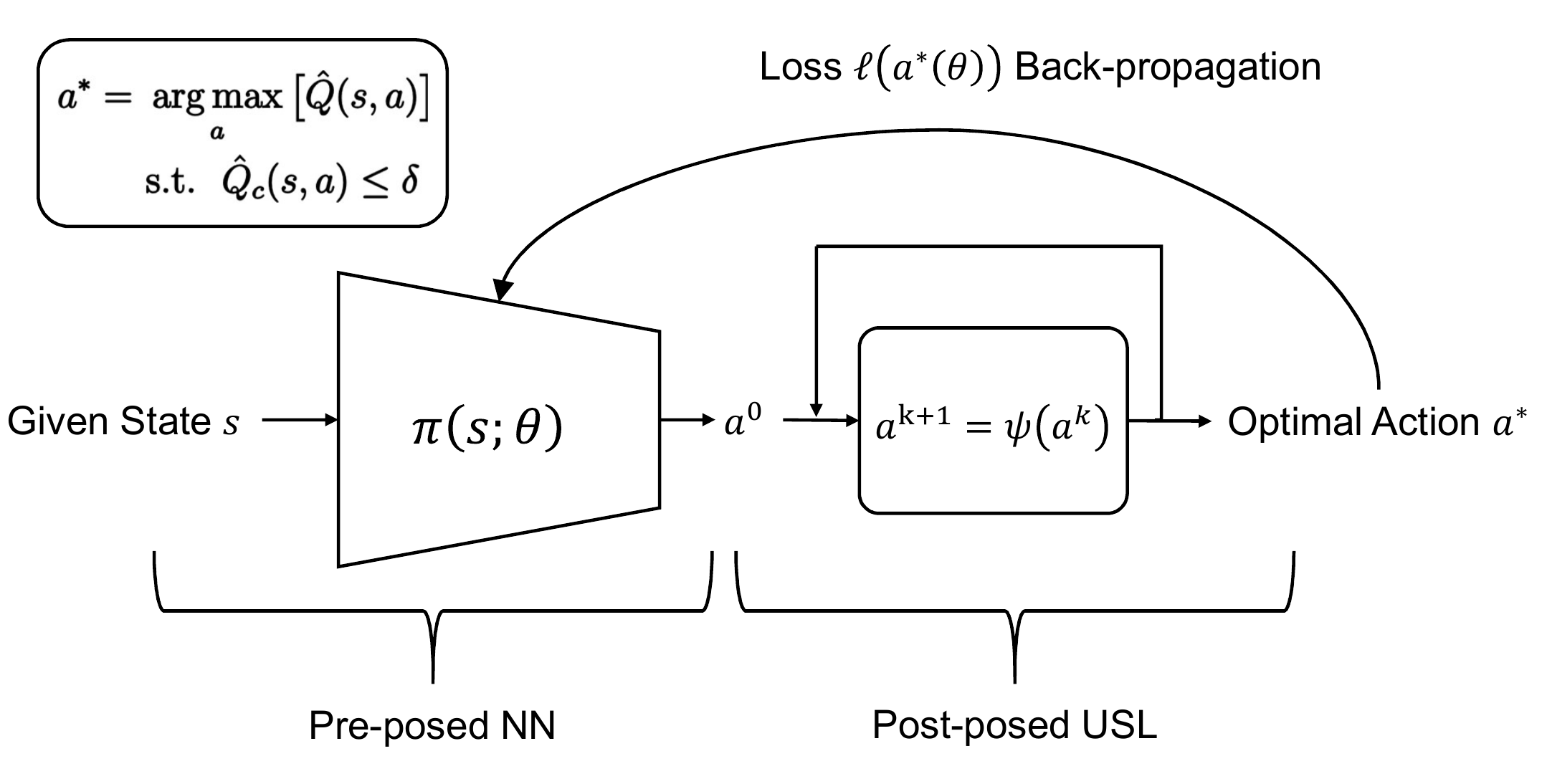}
      \caption{The deep unrolling architecture for safe RL. At each decision-making step, the pre-posed policy network outputs near-optimal action $a_0$; the post-posed unrolling safety layer (USL) takes $a_0$ as the initial solution and iteratively performs gradient-based corrections to enforce the hard constraint. Back-propagation of the state-wise constrained objective guides the policy optimization.}
      \label{fig:usl}
\end{figure}

\subsection{Stage 1: Policy Network as Approximate Solver}
We train a parametric neural network in the first stage as the approximate solver to problem~\eqref{infer}, which aims to output sub-optimal actions via naive forward computing. Different from Donti et al.~\shortcite{donti2021dc3} that simply applies $\ell_2$ regular term to the objective function, we use the exact penalty function~\cite{ijcai2022-520} as the alternative. The merit is that one can construct an equivalent counterpart for problem~\eqref{infer} with a finite penalty factor $\kappa$ as
\begin{equation}\label{obj}
\small
    \ell(\pi) = \mathbb{E}_{\mathcal{D}}\big[ -Q^\pi(s,\pi(s)) + \kappa\cdot\max\{0, Q^\pi_c(s,\pi(s)) - \delta\} \big].
\end{equation}

\begin{proposition}\label{exact}
Let $\mathcal{L}(\pi,\lambda)$ denote the Lagrangian function $ \mathbb{E}_{\mathcal{D}} \big[-Q^\pi(s,\pi(s)) + \lambda(s) \big(Q^\pi_{c}(s,\pi(s)) - \delta \big)\big]$. 
Assume that the optimal $\pi^*$ and $\lambda^*$ exist for the dual problem $\max_{\lambda\geq0} \min_\pi \mathcal{L}(\pi,\lambda)$. If $\kappa \geq ||\lambda(s)||_\infty$, it holds that \[\min \ell(\pi) \iff \max_{\lambda\geq0} \min_\pi \mathcal{L}(\pi,\lambda)\]
\end{proposition}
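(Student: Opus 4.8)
The plan is to recognize the exact penalty objective $\ell(\pi)$ as a \emph{partial} max-representation of the Lagrangian and then sandwich its optimal value between the two sides of the assumed saddle value $\mathcal{L}(\pi^*,\lambda^*)$. First I would invoke the elementary pointwise identity $\kappa\max\{0,g\}=\max_{0\le u\le\kappa}u\,g$ (valid for every scalar $g$), applied with $g=Q^\pi_c(s,\pi(s))-\delta$ and $u=\lambda(s)$. Since a Lagrange multiplier here is an arbitrary state-dependent function and the maximization decouples across states, the pointwise maximum may be pulled outside $\mathbb{E}_{\mathcal{D}}$, yielding the key reformulation
\[
\ell(\pi)=\max_{0\le\lambda\le\kappa}\ \mathcal{L}(\pi,\lambda),\qquad\text{hence}\qquad \min_\pi\ell(\pi)=\min_\pi\max_{0\le\lambda\le\kappa}\mathcal{L}(\pi,\lambda).
\]

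Next I would bound this quantity from both sides by $\mathcal{L}(\pi^*,\lambda^*)$. Weak duality for the $\kappa$-truncated problem gives $\min_\pi\max_{0\le\lambda\le\kappa}\mathcal{L}\ge\max_{0\le\lambda\le\kappa}\min_\pi\mathcal{L}$; because $\|\lambda^*\|_\infty\le\kappa$, the optimal dual multiplier $\lambda^*$ is feasible for the truncated inner maximization, so $\max_{0\le\lambda\le\kappa}\min_\pi\mathcal{L}\ge\min_\pi\mathcal{L}(\pi,\lambda^*)=\max_{\lambda\ge0}\min_\pi\mathcal{L}(\pi,\lambda)$, the assumed dual optimum. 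For the reverse direction, monotonicity of the inner max over its feasible set gives $\min_\pi\max_{0\le\lambda\le\kappa}\mathcal{L}\le\min_\pi\max_{\lambda\ge0}\mathcal{L}=\mathcal{L}(\pi^*,\lambda^*)$, the last equality being strong duality for the original pair (which I read off from the assumed existence of the saddle point $(\pi^*,\lambda^*)$). Chaining these inequalities collapses everything to one value, $\min_\pi\ell(\pi)=\mathcal{L}(\pi^*,\lambda^*)=\max_{\lambda\ge0}\min_\pi\mathcal{L}(\pi,\lambda)$, which gives the equivalence of optimal values.

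To upgrade "equal optimal values" to the genuine ``$\iff$'' between the two problems, I would check that the solution sets correspond. Evaluating the reformulation at $\pi^*$ and using the saddle inequality $\mathcal{L}(\pi^*,\lambda)\le\mathcal{L}(\pi^*,\lambda^*)$ for all $\lambda\ge0$ (in particular for $0\le\lambda\le\kappa$, with $\lambda^*$ itself admissible) shows $\ell(\pi^*)=\mathcal{L}(\pi^*,\lambda^*)=\min_\pi\ell(\pi)$, so any solution of the dual problem furnishes a minimizer of $\ell$. Conversely, a minimizer $\hat\pi$ of $\ell$ attains the saddle value, and feeding it back through $\ell(\hat\pi)=\max_{0\le\lambda\le\kappa}\mathcal{L}(\hat\pi,\lambda)$ together with complementary slackness / feasibility shows $\hat\pi$ is primal-optimal, i.e.\ solves problem~\eqref{infer}.

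I expect the main obstacle to be the technical justification of interchanging the pointwise maximum over the multiplier function $\lambda(\cdot)$ with the expectation $\mathbb{E}_{\mathcal{D}}$: this requires the multiplier to range over a sufficiently rich class of measurable functions and a measurable-selection (or monotone-convergence) argument to guarantee that the supremum of the integrand equals the integral of the pointwise suprema. A secondary point of care is being precise about what the hypothesis ``the optimal $\pi^*$ and $\lambda^*$ exist for the dual problem'' delivers — strictly one wants $(\pi^*,\lambda^*)$ to form a saddle point of $\mathcal{L}$ (equivalently, zero duality gap), which I would either fold into the hypothesis explicitly or treat as the standing assumption under which the statement is intended.
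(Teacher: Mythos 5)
Your argument is correct in substance, but note that the paper itself does not prove this proposition at all --- its ``proof'' is a one-line pointer to the authors' earlier work on exact penalty functions, so there is no internal derivation to compare against. What you supply is the standard exact-penalty argument, and it is the right one: the identity $\kappa\max\{0,g\}=\max_{0\le u\le\kappa}ug$, the interchange of the pointwise maximum with $\mathbb{E}_{\mathcal{D}}$ (which here needs no heavy measurable-selection machinery, since the optimal multiplier is explicitly $\lambda(s)=\kappa\cdot\mathbf{1}[Q^\pi_c(s,\pi(s))>\delta]$, measurable whenever $Q^\pi_c$ is), and the sandwich of $\min_\pi\max_{0\le\lambda\le\kappa}\mathcal{L}$ between the dual and primal values. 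Your two flagged caveats are both genuine and are glossed over by the paper: the hypothesis ``$\pi^*$ and $\lambda^*$ exist for the dual problem'' must indeed be read as the existence of a saddle point (zero duality gap), otherwise the upper half of your sandwich breaks. One further subtlety you brush past in the converse direction of the ``$\iff$'': with the non-strict hypothesis $\kappa\ge\|\lambda^*\|_\infty$ as stated, the chain of inequalities only delivers equality of \emph{optimal values}; to conclude that every minimizer of $\ell$ is feasible (hence primal-optimal), the classical exact-penalty theorem requires the strict inequality $\kappa>\|\lambda^*\|_\infty$, since at equality an infeasible $\hat\pi$ can still attain the penalized optimum. This is a defect of the proposition as stated rather than of your proof, but your appeal to ``complementary slackness / feasibility'' in that step should be made conditional on strictness.
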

\begin{proof}
The exactness of the penalty function can be referred to  our previous work~\cite{ijcai2022-520}.
\end{proof}

We use a fixed $\kappa$ as a hyper-parameter in the practical implementation and find that a large constant ($\kappa = 5$ in our experiments) is  empirically effective across different tasks even if the supremum of Lagrange multipliers is intractable to estimate. Moreover, if the actual $\lambda(s)$ tends to be positively infinitive for some critically dangerous states, there would be numerical issues for optimization-based approaches. On the contrary, the objective function~\eqref{obj} under those circumstances can be regarded as a penalty method by adding regularization terms and only gives a sub-optimal initial solution for the next stage. Fortunately, the proposed two-stage architecture does not require an optimal solution in the first stage, and the joint training and inference process with post-projection can tackle this problem to some extent.

 \begin{figure*}
      \centering
        \includegraphics[width=1.0\linewidth]{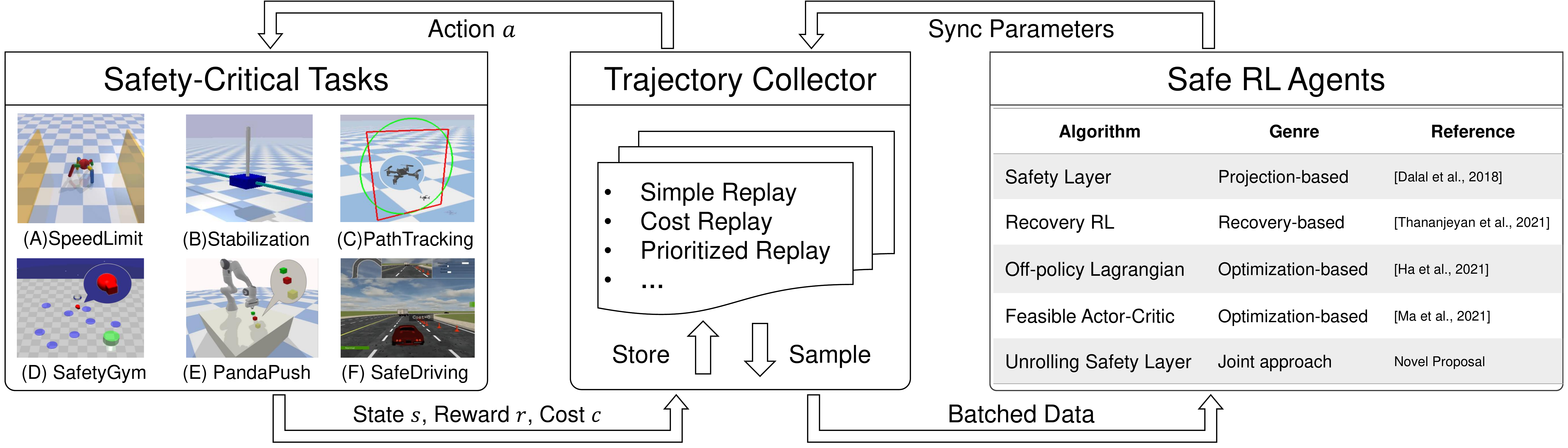}
      \caption{The schema of SafeRL-Kit. All the algorithms are implemented under off-policy settings and evaluated on six safety-critical tasks with the universal binary cost indicator and a constant constraint threshold.}
      \label{fig:framework}
\end{figure*}

\subsection{Stage 2: Gradient-based Projection}
The approximate solver in stage 1 may still output infeasible actions for the following reasons: (a) The supremum of Lagrangian multipliers in Proposition~\ref{exact} is hard to obtain, and we only settle $\kappa$ as a fixed, large but sub-optimal hyper-parameter. (b) The inherent issues of safe RL, such as the approximation in the modeling, sample-based learning, distributional shift, etc., make it possible for the end-to-end actor to violate hard constraints in the policy execution.

To address the above issue, the post-posed projection performs gradient steps to rectify the hard constraint from the initial iteration point $a^0 \sim \pi(\cdot|s)$. $\psi(\cdot)$ is the differentiable operator that takes intermediate action $a^k$ at $k_\mathrm{th}$ iteration as input and performs a gradient descent towards the constraint-violation wrapped with ReLU function:
\begin{equation}
    \psi(a^k) = a^k - \frac{\eta}{\mathcal{Z}_k} \cdot \frac{\partial}{\partial{a^k}}\big[Q_c(s,a_k)- \delta \big]^+.
\end{equation}

Here $\mathcal{Z}_k = || \frac{\partial}{\partial{a^k}}\big[Q_c(s,a_k)- \delta \big]^+||_\infty$ is the normalization factor that rescales the gradients on $a_k$, and therefore the hyper-parameter $\eta$ determines the maximum step size of the action change.

Notably, the iterative executions of  $\psi(\cdot)$ do not always converge to global (or even local) optima for the primal constrained optimization problem~\eqref{infer}. Nevertheless, such a method is highly effective in practice if the initial iteration point is close to the optimal solution~\citep{panageas2019first,donti2021dc3}. This fact emphasizes the necessity for training a pre-posed policy network via the exact penalty regularization, which provides a non-pathological initialization for USL. By means of minimizing the objective function~\eqref{obj}, the output of $\pi(\cdot|s)$ may be still infeasible sometimes but already close to the optimal action $a^*$. Thus, the sequence of $a^{k+1} = \psi(a^k)$ is expected to converge to $a^*$ when $k\rightarrow +\infty$. However, letting $k\rightarrow +\infty$ is not practical in use, and thus we set an upper limit $K$ as the maximum iterations of USL. Note that the value of $K$ needs to match the gradient step-size factor $\eta$. For example, we set $\eta = 0.05$ and $K = 20$ to enable USL to degrade the normalized action from $1$ to $0$ within maximum iterations.

More algorithmic details are summarized in Appendix A. 

\section{SafeRL-Kit: A Systematic Implementation}

To facilitate further research in this area, we release SafeRL-Kit\footnote{Project page: \url{https://sites.google.com/view/saferlkit}}, a reproducible and open-source safe RL toolkit as shown in Figure~\ref{fig:framework}. In brief, SafeRL-Kit contains a list of representative algorithms that address safe learning from different perspectives. Potential users can also incorporate domain-specific knowledge into appropriate baselines to build more competent algorithms for their tasks of interest. Furthermore, SafeRL-Kit is implemented in an off-policy training pipeline, which provides unified and efficient interfaces for fair comparisons among different algorithms on different benchmarks.

\subsection{Safety-critical Benchmarks}
SafeRL-Kit includes six safety-critical benchmarks, ranging from basic robotic control to autonomous driving,  which are well-explored in recent literature~\cite{yuan2021safe,ray2019benchmarking,li2021metadrive}. A short description of the benchmarks is presented below:

\begin{description}[leftmargin=0mm]
    \item[(A) SpeedLimit.] The four-legged ant runs along the avenue and receives a cost signal when exceeding the velocity limit.
    \item[(B) Stabilization.] The cart pole is rewarded for keeping itself upright while being constrained by angular velocity.
    \item[(C) PathTracking.]  The quadrotor tracks the green circular trajectory and receives a cost signal if it leaves the area allowed to fly bounded within the red rectangular.
    \item[(D) SafetyGym-PG.] The mass point moves to the green goal and is required to get rid of blue hazards.
    \item[(E) PandaPush.] The robotic arm pushes the green cube to the destination while avoiding collisions with the red cube.
    \item[(F) SafeDriving.] The autonomous vehicle learns to reach the navigation land markers as quickly as possible but is not allowed to collide with other vehicles or be out of the road.
\end{description}

\begin{figure*}
      \centering
        \includegraphics[width=1\linewidth]{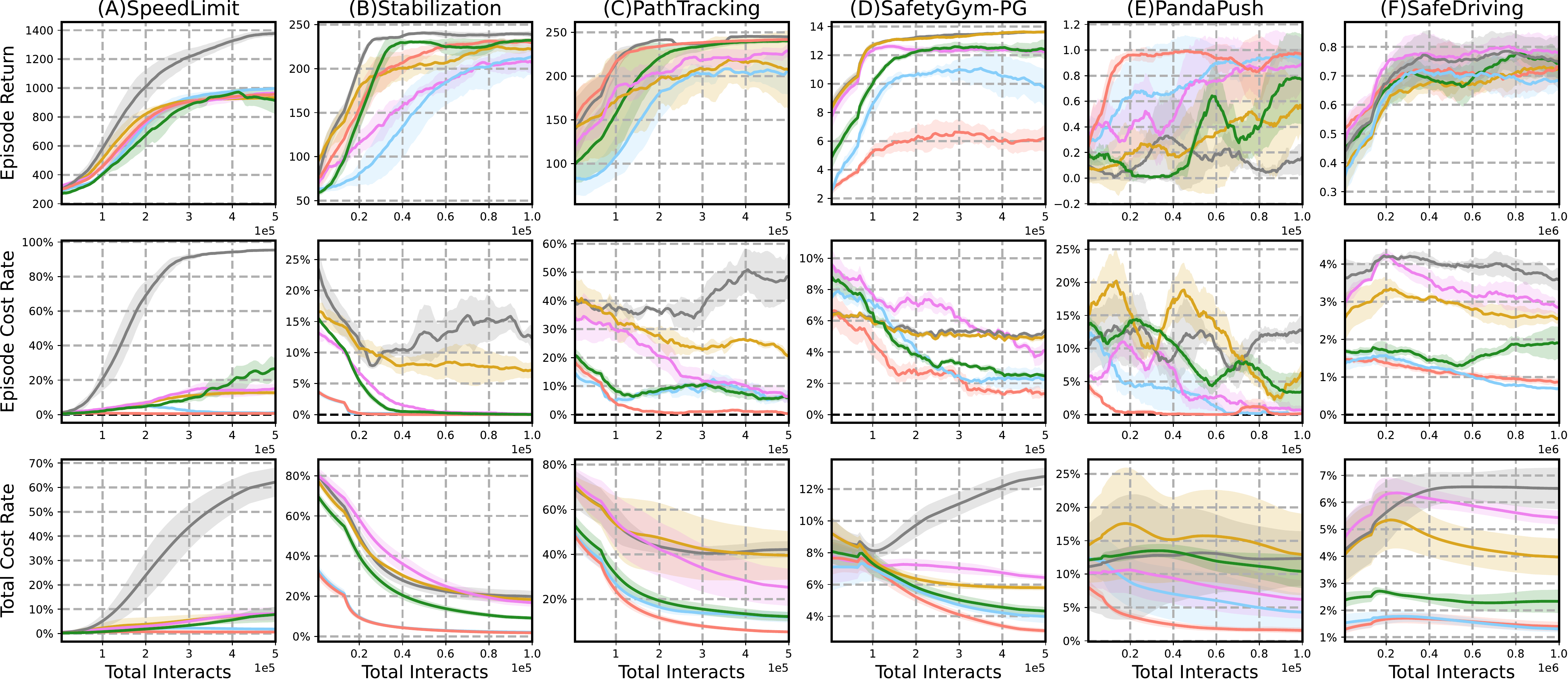}
        \includegraphics[width=0.75\linewidth]{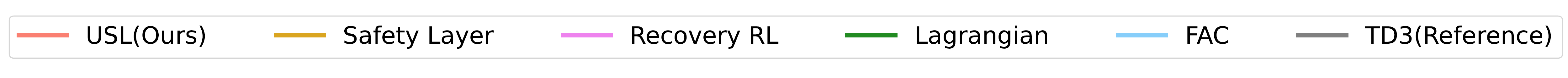}
      \caption{Learning curves of different algorithms on safety-critical tasks. The x-axis is the number of interactions. The y-axis represents episodic return (top line), episodic cost rate (middle line) and total cost rate (bottom line), respectively. The dashed line is the expected zero-cost limit in the test time.}
      \label{fig:main}
\end{figure*}

More detailed environment descriptions can be referred to Appendix B.1.

\subsection{Safe Learning Algorithms}

SafeRL-Kit includes five safe learning methods addressing safety-critical tasks from different perspectives. 
\begin{description}[leftmargin=0mm]
    \item[Safety Layer]\cite{dalal2018safe} for safety projection
    \begin{equation*}
    \small
        a^* = \arg\min_a \frac{1}{2} ||a-\pi_\theta(s) ||^2\quad \mathrm{s.t.} \ \  g_\omega(s)^\top a + \bar{c}(s) \leq \delta.
    \end{equation*}
    \item[Recovery RL]\citep{thananjeyan2021recovery} for safety recovery 
    
    $
\qquad\quad  a_t = 
        \begin{cases}
            \pi_\text{task}(s_t),& \text{if } Q^\pi_\text{risk}\big(s_t,\pi_\text{task}(s_t)\big)\leq \delta\\
            \pi_\text{risk}(s_t),  & \text{otherwise}
        \end{cases}.
$
    \item[Off-Policy Lagrangian Method]\citep{ha2020learning} for safety optimization
     \begin{equation*}
    \small
    \mathop{\max}_{\lambda \geq 0} \mathop{\min}_{\theta} \mathbb{E}_{\mathcal{D}} -Q^\pi(s,\pi_\theta(s)) + \lambda \big(Q^\pi_{c}(s,\pi_\theta(s)) - \epsilon \big).
\end{equation*}
    \item [Feasible Actor Critic (FAC)]\citep{ma2021feasible}  for safety optimization
\begin{equation*}
\small
    \mathop{\max}_{\xi} \mathop{\min}_{\theta} \mathbb{E}_{\mathcal{D}} -Q^\pi(s,\pi_\theta(s)) + \lambda_\xi(s) \big(Q^\pi_{c}(s,\pi_\theta(s)) - \epsilon \big).
\end{equation*}
\item[Unrolling Safety Layer (USL).] A joint approach proposed in this paper combining safety projection and optimization.
\end{description}

All the above algorithms in SafeRL-Kit are implemented under the off-policy Actor-Critic architecture. Although these model-free algorithms may inevitably encounter cost signals in the training process, they still enjoy better sample efficiency with fewer unsafe transitions compared with on-policy implementations~\cite{ray2019benchmarking}, and can better leverage human demonstration if needed. The essential updates of backbone networks uniformly follow TD3~\citep{fujimoto2018addressing}, and thus we can perform a fair evaluation to see which of them are best suited for safety-critical tasks.

More details of the implemented algorithms can be referred to the Appendix B.2. 

\subsection{Cost Function and Evaluation Metrics}
Without loss of generality, we uniformly designate the cost function as a binary indicator (1 for unsafe transitions; 0 for other cases), and our experiments aim to obtain stationary policies that adhere to zero cost signals. It is worth noting that some works define cost more prospectively, for example, using the distance from the car to the road boundary in autonomous driving scenarios~\cite{chen2021safe}. We omit any task-dependent hand-crafting in our comparative study since they overly rely on domain-specific knowledge and are sometimes intractable on complex tasks. Instead, receiving an instantaneous cost signal is much more straightforward and can be generalized to related tasks. 

Considering the properties of safety-critical tasks and the definition of the cost function, we employ the following metrics for the joint evaluation:
\begin{description}[leftmargin=0mm]
    \item[Episodic Return] $\triangleq \text{sum of rewards in the test time}.$  It indicates how well the agent finishes the original task.
    \item[Episodic Cost Rate] $\triangleq\frac{\text{number of cost signals}}{\text{length of the episode}}.$ It indicates how safe the agent is in the test time.
    \item[Total Cost Rate] $\triangleq\frac{\text{total number of cost signals}}{\text{total number of training steps}}.$ It indicates how safe the agent is in the whole training process.
\end{description}

\begin{table*}[t]
\begin{center}
\begin{small}
\begin{sc}
\caption{Mean performance at convergence  with 95\% confidence interval for different algorithms on safety-critical tasks.}
\label{tab:perf}
\setlength\tabcolsep{3pt}
\resizebox{1\textwidth}{42mm}{
\begin{tabular}{|ll|cccccc|}
\toprule
 \multicolumn{2}{|c|}{\small{Tasks}} & \small{USL(ours)} & \small{Safety Layer} & \small{Recovery RL}  & \small{Lagrangian} & \small{FAC} & \small{TD3(ref)}\\
\midrule
\multirow{3}*{\scriptsize{(A)Speedlimit}} & \scriptsize Ep Return & $965.97\pm 5.09$& $935.69\pm9.11$ & $965.73\pm 3.52$& $897.32\pm 77.80$& \bm{$978.77\pm 12.12$}& $1382.16\pm 19.65$\\
~ & \scriptsize Ep CostRate(\%) & \bm{$0.63\pm0.55$} & $12.73\pm2.71$ &$15.10\pm4.07$ & $27,17\pm14.00$& $1.07\pm0.58$& $95.34\pm1.03$\\
~ & \scriptsize Tot CostRate(\%) & \bm{$0.63\pm0.06$} & $8.00\pm0.75$ &$8.25\pm2.74$ & $8.01\pm3.28$& $1.85\pm0.40$&$62.56\pm5.89$\\
\specialrule{0em}{1pt}{1pt}
\hline
\specialrule{0em}{1pt}{1pt}
\multirow{3}*{\scriptsize{(B)Stabilization}} & \scriptsize Ep Return & $228.18\pm 3.88$& $222.80\pm 15.22$& $204.23\pm3.92$ & \bm{$231.61\pm 2.12$}& $214.20\pm 19.38$& $238.47\pm 3.99$\\
~ & \scriptsize Ep CostRate(\%) & \bm{$0.00\pm0.00$} & $6.98\pm2.96$ &$0.10\pm0.04$ & $0.03\pm0.06$& $0.02\pm0.03$& $12.89\pm4.08$\\
~ & \scriptsize Tot CostRate(\%) & \bm{$1.77\pm0.23$} & $18.27\pm2.06$ &$16.63\pm1.98$ & $9.05\pm0.85$& $2.07\pm0.19$&$20.01\pm2.87$\\
\specialrule{0em}{1pt}{1pt}
\hline
\specialrule{0em}{1pt}{1pt}
\multirow{3}*{\scriptsize{(C)Pathtracking}} & \scriptsize Ep Return & \bm{$241.74\pm 0.88$}& $205.93\pm 46.73$& $229.05\pm7.16$ & $240.50\pm 2.03$& $218.28\pm 28.99$& $248.80\pm 0.54$\\
~ & \scriptsize Ep CostRate(\%) & \bm{$0.17\pm0.18$} & $18.79\pm5.52$ &$6.22\pm3.91$ & $5.77\pm2.27$& $6.33\pm3.55$& $48.40\pm9.49$\\
~ & \scriptsize Tot CostRate(\%) & \bm{$5.36\pm0.60$} &$39.44\pm10.66$ & $25.08\pm8.27$ & $12.18\pm2.10$& $11.92\pm2.43$&$42.22\pm3.74$\\
\specialrule{0em}{1pt}{1pt}
\hline
\specialrule{0em}{1pt}{1pt}
\multirow{3}*{\scriptsize{(D)Safetygym}} & \scriptsize Ep Return & $6.36\pm 0.90$& \bm{$13.64\pm 0.05$}& $12.24\pm0.26$ & $12.28\pm 0.87$ & $ 9.66\pm 1.24$& $13.65\pm 0.12$\\
~ & \scriptsize Ep CostRate(\%) & \bm{$1.49\pm0.74$} & $5.17\pm0.47$ &$4.06\pm1.68$ & $2.47\pm0.41$& $1.84\pm.88$& $5.41\pm0.16$\\
~ & \scriptsize Tot CostRate(\%) & \bm{$3.07\pm0.15$} & $5.79\pm0.16$ &$6.40\pm0.22$ & $4.31\pm0.27$& $3.97\pm0.42$&$12.80\pm0.55$\\
\specialrule{0em}{1pt}{1pt}

\hline
\specialrule{0em}{1pt}{1pt}
\multirow{3}*{\scriptsize{(E)Pandapush}} & \scriptsize Ep Return & \bm{$0.96\pm 0.04$}& $0.55\pm 0.22$& $0.89\pm0.19$ & $0.77\pm 0.35 $& $0.92\pm 0.08$& $0.16\pm 0.13$\\
~ & \scriptsize Ep CostRate(\%) & \bm{$0.17\pm0.17$} & $6.28\pm4.44$ &$0.66\pm 0.57$ & $6.28\pm4.44$& $3.48\pm2.55$& $12.56\pm4.70$\\
~ & \scriptsize Tot CostRate(\%) & \bm{$1.56\pm0.43$} & $12.89\pm6.11$ &$6.17\pm2.87$ & $10.37\pm2.52$& $4.28\pm2.84$&$12.35\pm4.06$\\
\specialrule{0em}{1pt}{1pt}

\hline
\specialrule{0em}{1pt}{1pt}
\multirow{3}*{\scriptsize{(F)Safedriving}} & \scriptsize Ep Return & $0.73\pm 0.04$& $0.73\pm 0.05$& \bm{$0.78\pm0.06$} & $0.74\pm 0.05$& $0.69\pm 0.04$& $0.80\pm 0.06$\\
~ & \scriptsize Ep CostRate(\%) & $0.85\pm0.14$ & $2.59\pm0.22$ &$2.83\pm0.38$ & $1.85\pm0.98$& \bm{$0.66\pm0.10$}& $3.81\pm0.51$\\
~ & \scriptsize Tot CostRate(\%) & \bm{$1.30\pm0.17$} & $3.96\pm0.68$ &$5.42\pm0.25$ & $2.33\pm0.44$& $1.30\pm0.13$ & $6.22\pm0.82$\\
\specialrule{0em}{1pt}{1pt}
\bottomrule
\end{tabular}
}
\end{sc}
\end{small}
\end{center}
\end{table*}

\begin{table*}[t]
\begin{center}
\begin{small}
\begin{sc}
\caption{Ablation study for USL on two representative tasks.}
\label{tab:ab}
\resizebox{1\textwidth}{15mm}{
\begin{tabular}{|c|ccc|ccc|}
\toprule
 Tasks & \multicolumn{3}{c|}{Speedlimit}  &\multicolumn{3}{c|}{Pathtracking}\\
\midrule
USL Models & \scriptsize{Ep Return} & \scriptsize{Ep CostRate(\%)} & \scriptsize{Tot CostRate(\%)} & \scriptsize{Ep Return} & \scriptsize{Ep CostRate(\%)} & \scriptsize{Tot CostRate(\%)} \\
\midrule
\scriptsize{Stage 1 + Stage 2} & $965.97\pm 5.09$ & $0.63\pm0.55$& $0.63\pm0.06$ & $241.74\pm 0.80$ & $0.09\pm0.09$& $5.35\pm0.60$\\
\scriptsize{Stage 1 only}& $1016.03\pm 29.17$  & $5.05\pm2.69$& $2.53\pm 0.39$& $242.32\pm 0.27$ & $0.49\pm0.09$& $8.74\pm0.72$ \\
\scriptsize{Stage 2 only}& $989.12\pm 96.62$ & $38.87\pm15.50$& $13.25\pm7.82$& $211.88\pm18.28$ & $0.62\pm0.15$& $18.24\pm1.66$ \\
\scriptsize{Unconstrained} & $1382.16\pm 19.65$ & $95.34\pm1.03$& $62.56\pm5.89$& $244.80\pm0.54s$ & $24.20\pm4.75$& $42.22\pm3.74s$ \\
\bottomrule
\end{tabular}
}
\end{sc}
\end{small}
\end{center}
\end{table*}

\begin{table*}[t]
\begin{center}
\begin{small}
\begin{sc}
\caption{Computational efficiency for different algorithms ($K=20, \eta=0.05$ for USL).}
\label{tab:perf2}
\setlength\tabcolsep{12pt}

\resizebox{\textwidth}{10mm}{
\begin{tabular}{|c|cccccc|}
\toprule
Metrics & USL(ours) & Recovery &  Safety Layer & Lagrangian & FAC & TD3(ref) \\
\midrule
\scriptsize Normalized Inference Time& 5.0& 1.2& 1.6& 1.0& 1.0& 1.0\\

\scriptsize Average Inference Time (s) &
25E-4& 6E-4& 8E-4& 5E-4& 5E-4& 5E-4\\

\scriptsize Max Control Frequency (Hz) &
400& 1666& 1250& 2000& 2000& 2000\\

\bottomrule
\end{tabular}
}
\end{sc}
\end{small}
\end{center}
\end{table*}

\section{Experiments}

In this section, we empirically evaluate model-free RL toward safety-critical tasks based on SafeRL-Kit and investigate the following research questions.

\subsection{Q1: How applicable and robust are the algorithms regarding state-wise constraints?}

We plot the learning curves of each algorithm on different tasks over five random seeds in Figure~\ref{fig:main} and report their mean performance at convergence in Table~\ref{tab:perf}. Detailed hyper-parameters settings are presented in the supplementary material C.1.

TD3~\cite{fujimoto2018addressing} is used as the unconstrained reference for the upper bounds of reward performance and cost signals. On the \emph{SpeedLimit} task, the safety constraint  ($velocity < 1.5 m/s$) is easily violated since the ant is able to move much faster for higher rewards. Thus, we can clearly observe that the TD3 agent achieves over $95\%$ episodic cost rate at convergence while other risk-aware algorithms in SafeRL-kit suppress the explosion of cost rate. On other tasks, the cost signals are more sparse, such as the accidental collisions with obstacles in autonomous driving. Nevertheless, the evaluated algorithms still effectively reduce the likelihood of safety violations.

The empirical results reveal that Safety Layer and Recovery RL are comparatively ineffective in reducing the cost return. For Safety Layer, the main reasons are that the linear approximation to the cost function brings about non-negligible errors, and the single-step correction is myopic for future risks. For Recovery RL, the estimation error of $Q_\text{risk}$ is the major factor affecting its efficacy. 

By contrast, Off-policy Lagrangian and FAC have significantly lower cumulative costs. However, Lagrangian-based methods may suffer from the inherent issues due to primal-dual ascents. For one thing, the Lagrangian multiplier tuning causes oscillations in learning curves. For another thing, the performance may heavily depend on the Lagrangian multipliers' initialization and learning rate. According to the sensitivity analysis, we find that Lagrangian-based methods are susceptible to the learning rate of the Lagrangian multiplier(s) in stochastic primal-dual optimization. First, the oscillating $\lambda$ causes non-negligible deviations in the learning curves. Second, the increasing $\eta_\lambda$ may degrade the performance dramatically. The phenomenon is especially pronounced in FAC, which has a multiplier network to predict the state-dependent $\lambda(s;\xi)$. Consequently, we suggest to ensure $\eta_\lambda \ll \eta_\theta$ in practice. We put the sensitivity analysis of these two algorithms in Appendix C.2.

 \begin{figure}
      \centering
        \includegraphics[width=1\linewidth]{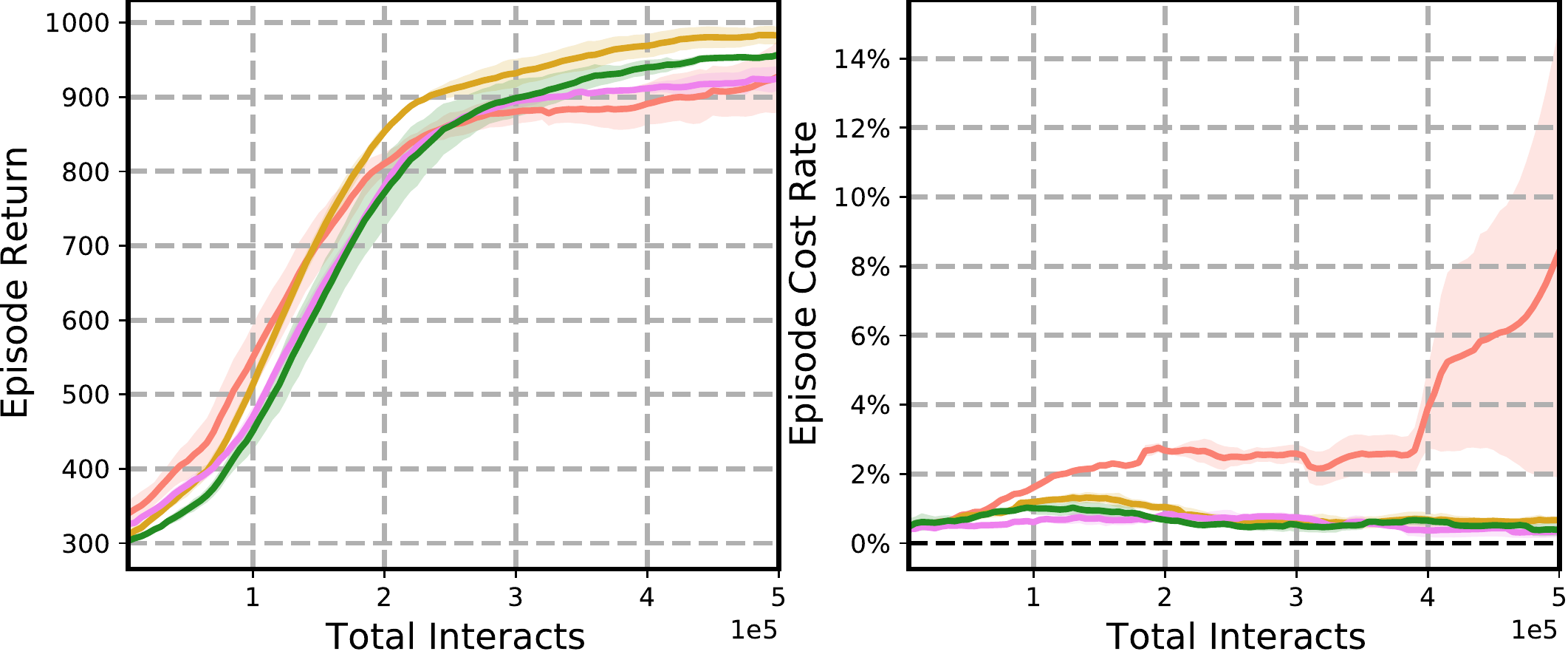}
        \includegraphics[width=0.75\linewidth]{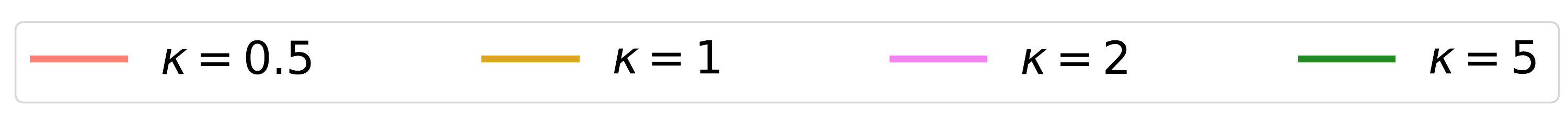}
      \caption{Sensitivity analysis on penalty factor $\kappa$.}
      \label{fig:kappa}
\end{figure}

In summary, the proposed USL is clearly effective for learning constraint-satisfying policies. First, USL achieves higher or competitive returns while adhering to (almost) zero cost return across different tasks. Second, USL features minor standard deviations and oscillations, which demonstrates its robustness. At last, USL generally converges with fewer interactions, which is crucial in sample-expensive risky environments.
The underlying reason is that the optimization stage is equivalent to FAC but the state-dependent Lagrangian multipliers are reduced to a single fixed hyper-parameter. Meanwhile, the consistent loss function stabilizes the training process compared with primal-dual optimization. Furthermore, the projection stage explicitly enforces the state-wise constraints intractable in naive forward computing. 

Additional experiments for comparing the state-wise constraint, the episodic constraint and reward shaping are placed in the supplementary material C.3.

\subsection{Q2: How to account for the importance of the two stages in USL?}

To better understand the importance of the two stages in our approach, we perform an ablation study as shown in Table~\ref{tab:ab} and confirm that the two stages must work jointly to achieve the desired performance.  An intuitive example is that the solution derived by standard RL may be far away from the desired optimal safe action on tasks such as \emph{SpeedLimit}. Thus, directly post-optimizing over $Q^\pi_c$ may not necessarily converge to $a^*$, and the agent still has a 38\% cost rate. Instead, if the initial solution from Stage 1 is close to $a^*$, it can serve as a valid candidate and the cost rate goes down to 0.63\%. Note that, when the unconstrained action is not that far from the safe set, such as on the \emph{PathTracking} task, both the optimization and projection parts can effectively degrade the cost rate from 24\% to less than 1\%. However, using only Stage 2 is inferior on episodic return, which is an inherent flaw of the projection-based method.

\subsection{Q3: How sensitive is USL to its hyper-parameters and how to tune them?}

We study the impacts of two pivotal hyper-parameters in USL, namely the penalty factor $\kappa$ in the training objective and the maximum iterative number $K$ in the post-projection, on the \emph{SpeedLimit} task. For $\kappa$, Figure~\ref{fig:kappa} shows that final policies are insensitive to its value, and the learning curves are almost identical for sufficiently large $\kappa$ values. By contrast, a small $\kappa$ value may degenerate the first stage of USL into a ``soft'' regularization method. In our experiments, we find $\kappa = 5$ generally achieves good performance across different tasks. For $K$, Figure~\ref{fig:K} shows that USL can enforce the hard constraint within five iterations at most decision-making steps, indicating the possibility of navigating the trade-off between constraint satisfaction and computational efficiency. We set $K = 0$ in the sensitivity study and show that the single optimization in Stage 1 can not lead to zero cost return without the aid of the post-posed projection.

\begin{figure}
      \centering
        \includegraphics[width=1\linewidth]{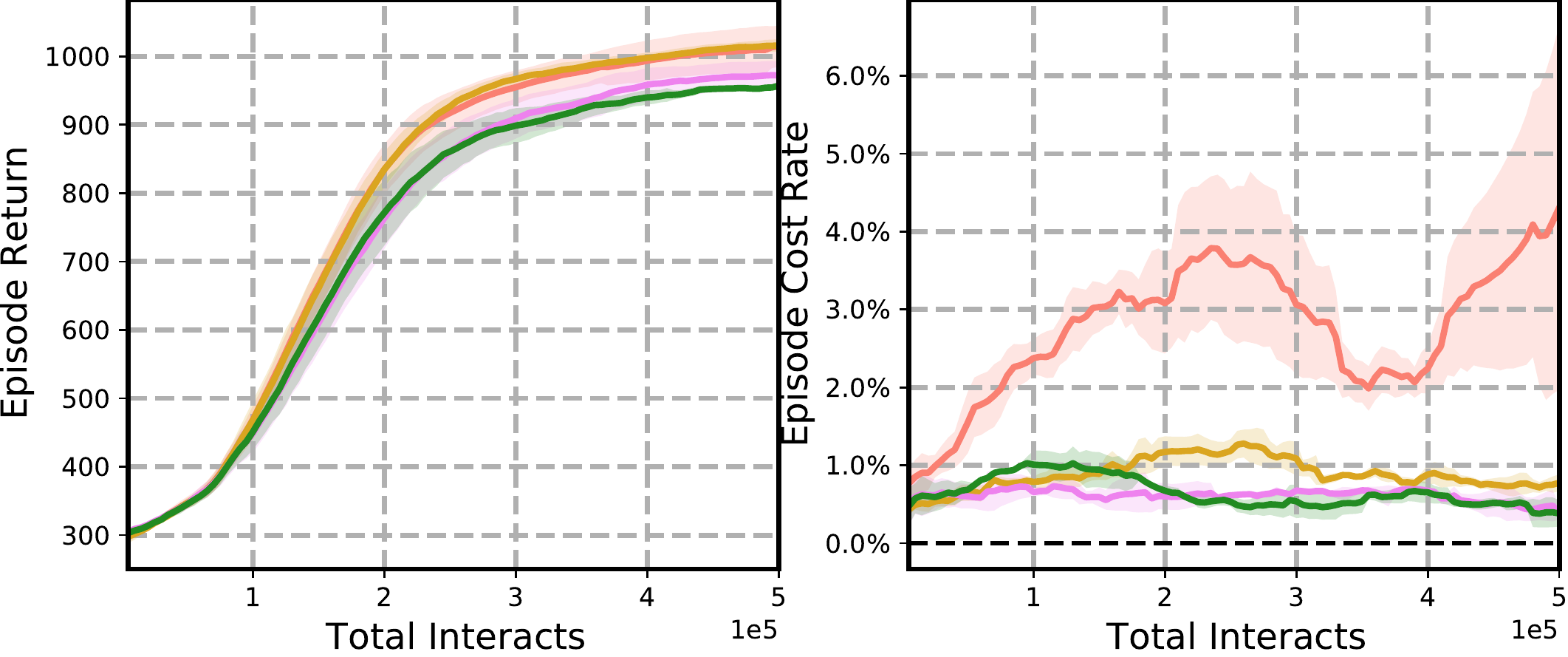}
        \includegraphics[width=0.75\linewidth]{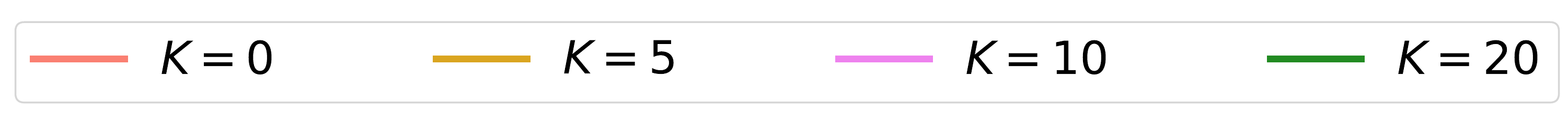}
      \caption{Sensitivity analysis on iterative number $K$.}
      \label{fig:K}
\end{figure}
\subsection{Q4: How is the computational efficiency of USL with the additional iterative steps?}
The two-stage architecture of USL inevitably brings concerns on computational feasibility in real-world applications. We compare different algorithms on a mainstream computing platform (Intel Core i7-9700K, NVIDIA GeForce RTX 2070). Table~\ref{tab:perf2} shows that USL takes around 4-5 times the inference time of the unconstrained TD3 but still achieves an admissible 400 Hz control frequency. Having said that, we will leave the efforts on improving the time efficiency of USL to accelerate inference speed as future work.

\section{Conclusions}
In this paper, we perform a comparative study on model-free reinforcement learning toward safety-critical tasks following state-wise safety constraints. We revisit and evaluate related algorithms from the perspective of safety projection, recovery, and optimization, respectively. Furthermore, we propose Unrolling Safety Layer (USL) and demonstrate its efficacy in improving the episodic return and enhancing the safety-constraint satisfaction with an admissible computational complexity. We also present the open-sourced SafeRL-Kit and invite researchers and practitioners to incorporate domain-specific knowledge into the baselines to build more competent algorithms for their tasks.

\section{Acknowledgments}
This work is supported by the National Key R\&D Program of China (2022YFB4701400/4701402), the National Natural Science Foundation of China (No. U21B6002, U1813216, 52265002), and the Science and Technology Innovation 2030 – “Brain Science and Brain-like Research” key Project (No. 2021ZD0201405).

\bibliography{aaai23}
\clearpage
\onecolumn
\setcounter{equation}{0}
\renewcommand{\theequation}{A.\arabic{equation}}

\section{Supplementary Material A: Algorithmic Details}
\begin{algorithm}[H]
\caption{Deterministic Policy Gradients with USL.}
\begin{algorithmic}[1]
\label{algo1}
\REQUIRE deterministic policy network $\pi(s;\theta)$; critic networks $\hat{Q}(s,a;\phi)$ and $\hat{Q}_c(s,a;\varphi)$
\FOR{t \textbf{in} $1,2,...$}
\STATE $a^0_t = \pi(s_t;\theta) + \epsilon,\ \ \epsilon\sim\mathcal{N}(0,\sigma)$.
\FOR{k \textbf{in} 1,2,...$K$ }
\STATE $a^{k}_t = \psi(a^{k-1}_t)$.
\ENDFOR
\STATE Apply $a_t = a^{K}_t$ to the environment.
\STATE Store the transition $(s_t,a_t,s_{t+1},r_t,c_t,d_t)$ in $\mathcal{B}$.
\STATE Sample a mini-batch of $N$ transitions from $\mathcal{B}$.
\STATE $\varphi \leftarrow {\arg\min}_\varphi \mathop{\mathbb{E}}_{\mathcal{B}} \big[\hat Q_c(s,a;\varphi)-\big(c+\gamma_c(1-d) \hat Q_C(s',\pi(s';\theta);\varphi) \big)\big]^2$.
\STATE $\phi \leftarrow {\arg\min}_\phi \mathop{\mathbb{E}}_{\mathcal{B}} \big[\hat Q(s,a;\phi)-\big(r+\gamma(1-d) \hat Q(s',\pi(s';\theta);\phi)\big)\big]^2$.
\STATE $\theta \leftarrow {\arg\min}_\theta \mathop{\mathbb{E}}_{\mathcal{B}}\big[ -\hat Q (s,\pi(s;\theta);\phi) + \kappa\cdot\max\{0, \hat Q_c(s,\pi(s;\theta);\varphi) - \delta\} \big]$.
\ENDFOR
\end{algorithmic}
\end{algorithm}

\begin{algorithm}[H]
\caption{Stochastic Policy Gradients with USL.}
\begin{algorithmic}[1]
\label{algo2}
\REQUIRE stochastic policy network $\pi(\cdot|s;\theta)$; critic networks $\hat{Q}(s,a;\phi)$ and $\hat{Q}_c(s,a;\varphi)$
\FOR{t \textbf{in} $1,2,...$}
\STATE $a^0_t \sim \pi(\cdot|s_t;\theta)$.
\FOR{k \textbf{in} 1,2,...$K$ }
\STATE $a^{k}_t = \psi(a^{k-1}_t)$.
\ENDFOR
\STATE Apply $a_t = a^{K}_t$ to the environment.
\STATE Store the transition $(s_t,a_t,s_{t+1},r_t,c_t,d_t)$ in $\mathcal{B}$.
\STATE Sample a mini-batch of $N$ transitions from $\mathcal{B}$.
\STATE Sample $a' \sim \pi(\cdot|s';\theta)$.
\STATE $\varphi \leftarrow {\arg\min}_\varphi \mathop{\mathbb{E}}_{\mathcal{B}} \big[\hat Q_c(s,a;\varphi)-(c+\gamma_c(1-d) \hat Q_C(s',a';\varphi')  - \alpha \log \pi(a'|s';\theta)\big)\big]^2$.
\STATE $\phi \leftarrow {\arg\min}_\phi \mathop{\mathbb{E}}_{\mathcal{B}} \big[\hat Q(s,a;\phi)-\big(r+\gamma(1-d) \hat Q(s',\pi(s';\theta');\phi') - \alpha \log \pi(a'|s';\theta)\big)\big]^2$.
\STATE Utilize reparameterization to calculate $\tilde{a}(\theta) = \tanh \big( \mu_\theta(s) + \sigma_\theta(s) \odot \xi\big),\ \ \xi \sim \mathcal{N}(0,1)$.
\STATE $\theta \leftarrow {\arg\min}_\theta \mathop{\mathbb{E}}_{\mathcal{B}}\big[ -\hat Q (s,\tilde{a}(\theta);\phi) + \kappa\cdot\max\{0, \hat Q_c(s,\tilde{a}(\theta);\varphi) - \delta\} \big]$.
\ENDFOR
\end{algorithmic}
\end{algorithm}

\section{Supplementary Material B: Implementation Details}
\subsection{B.1 Safety-Critical Benchmarks}
\label{pp4}
\renewcommand{\arraystretch}{1.5}
\begin{longtable}{p{5cm}<{\centering}|p{11.5cm}<{\centering}}

		\hline
		 Environments & Descriptions\\
		 \hline
		 	\begin{minipage}[b]{0.275\columnwidth}
		 \vspace{0.15cm}
		\centering
		(a) SpeedLimit\\
		\raisebox{0\height}{\includegraphics[width=0.8\linewidth,height=2.5cm]{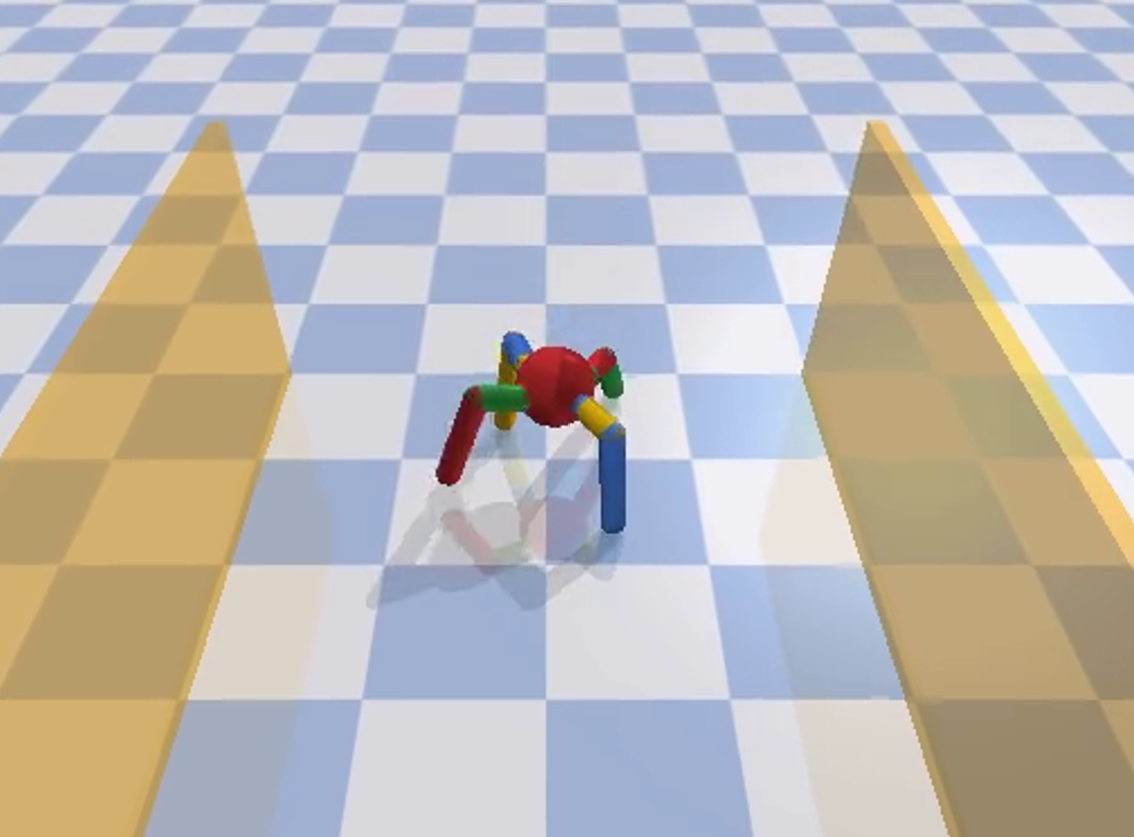}}
		
	\end{minipage} & 		 \begin{minipage}[b]{0.625\columnwidth}
	\vspace{0.15cm}
   Refer to~\citet{zhang2020first}. The observation space ($\mathcal{S} \in \mathbb{R}^33$) contains ego-information, including position, linear velocity, quaternion, the angular velocity, the feet contact forces, etc. The action space ($\mathcal{A} \in \mathbb{R}^8$) denotes the forces applied to each motors. The four-legged ant is rewarded for running along the avenue, which is calculated by the forward distance towards the target $x_\text{target} = +\infty$. However, it is constrained with its linear velocity $\dot x$. In our experiments, the agent receives $+1$ cost signal if $|\dot x| > 0.15 m/s$ or the ant is out of the yellow boundary , i.e., $|y| > 2 m$.
	\end{minipage}\\
		 \hline
		 \begin{minipage}[b]{0.275\columnwidth}
		 \vspace{0.15cm}
		\centering
		(b) Stabilization\\
		\raisebox{0\height}{\includegraphics[width=0.8\linewidth,height=2.5cm]{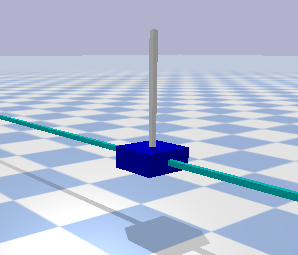}}
		
	\end{minipage} & 		 \begin{minipage}[b]{0.625\columnwidth}
	\vspace{0.15cm}
   Refer to~\citet{yuan2021safe}. The cart-pole is rewarded for keeping itself upright, but is constrained with its angular degree $\theta$ and angular velocity $\dot\theta$. The observation space ($\mathcal{S} \in \mathbb{R}^4$) contains the horizontal position of the cart $x$, the velocity of the cart $\dot x$, the angle of the pole w.r.t vertical $\theta$ and the angular velocity of the pole $\dot\theta$. The action space ($\mathcal{A} \in \mathbb{R}$) is the force $F$ applied to the center of the mass of the cart. The reward function is an instantaneous signal of +1 if the pole is upright ($|\theta| \leq \theta_{\max}$). The agent receives a +1 cost signal if $|\theta| > \theta_{\max}$ or $|\dot\theta| > \dot\theta_{\max}$. In our experiments, we set $\theta_{\max} = \dot\theta_{\max} = 0.2$.
	\end{minipage}\\
		 \hline
		 \begin{minipage}[b]{0.275\columnwidth}
		 \vspace{0.15cm}
		\centering
		(c) PathTracking\\
		\raisebox{0\height}{\includegraphics[width=0.8\linewidth,height=2.5cm]{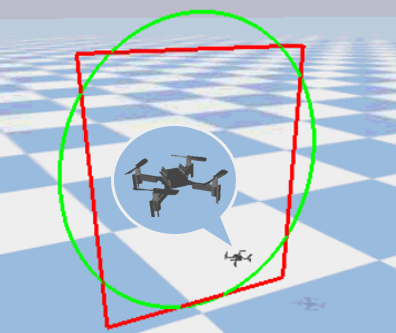}}
		
	\end{minipage} & 		 \begin{minipage}[b]{0.625\columnwidth}
	\vspace{0.15cm}
   Refer to~\citet{yuan2021safe}. The drone (UAV) is rewarded for tracking a circular trajectory, but the safe area is bounded within a smaller rectangular. The observation space ($\mathcal{S} \in \mathbb{R}^6$) contains the translation position $x,z$ and velocity $\dot x, \dot z$ of the drone in the $xz-$plane, as well as the pitch angle $\theta$ and pitch angular velocity $\dot\theta$. The action space ($\mathcal{A} \in \mathbb{R}^2$) denotes the thrusts $[T_1,T_2]$ generated by the two motors. The reward function is in a quadratic form w.r.t to the reference $x$ and $a$. The agent receives $+1$ cost signal if it is out of the area allowed to fly. In our experiments, we set $-0.4m \leq x_\text{safe} \leq 0.4m$ and $0.05m \leq y_\text{safe} \leq 0.9m$.
	\end{minipage}\\
		 \hline
		\begin{minipage}[b]{0.275\columnwidth}
		 \vspace{0.15cm}
		\centering
		(d) Safetygym-PG\\
		\raisebox{0\height}{\includegraphics[width=0.8\linewidth,height=2.5cm]{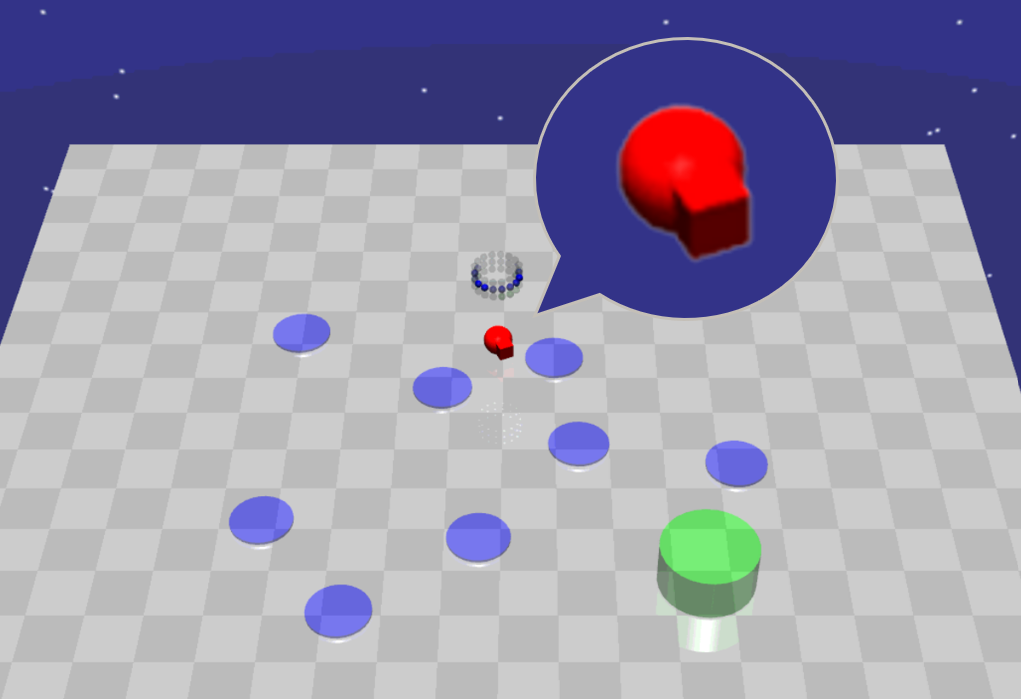}}
		
	\end{minipage} & 		 \begin{minipage}[b]{0.625\columnwidth}
	\vspace{0.15cm}
   Refer to~\citet{ray2019benchmarking}. The observation space ($\mathcal{S} \in \mathbb{R}^60$) contains ego information and Lidar information (towards hazards and the goal respectively), etc. The action space ($\mathcal{A} \in \mathbb{R}^2$) denotes the linear velocity and angular velocity of the point mass. The point mass is rewarded for getting close to the green destination. However, it is the agent receives $+1$ cost signal if it overlaps with the virtual blue hazards. Due to the original point-goal environment is not designed for zero-cost tasks, we inherit the setting as~\citet{zhao2021model}, where the number of hazards is 8 and the radius of them is 0.3m.
	\end{minipage}\\
		 \hline
			\begin{minipage}[b]{0.275\columnwidth}
		 \vspace{0.15cm}
		\centering
		(e) PandaPush\\
		\raisebox{0\height}{\includegraphics[width=0.8\linewidth,height=2.5cm]{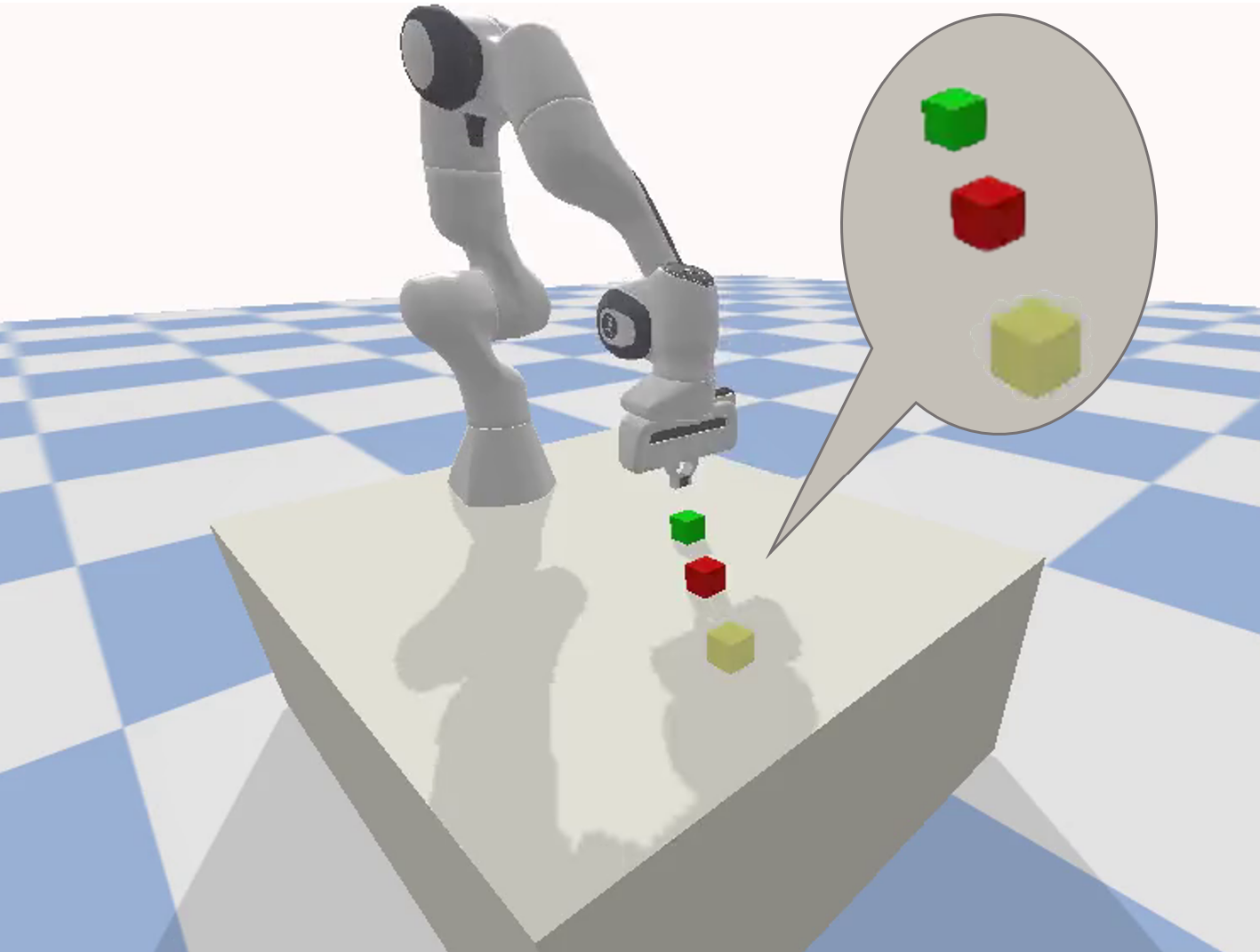}}
		
	\end{minipage} & 		 \begin{minipage}[b]{0.625\columnwidth}
	\vspace{0.15cm}
   Refer to Zhang et al. (2022). The 7-DoF Franka Emika Panda manipulator is required to push the green cube to the destination marked in yellow and the environment returns a sparse reward (0 for finished and -1 for unfinished). We add a red cube in the optimal path and return a +1 if the green cube collides with the obstacle. The observation ($\mathcal{S} \in \mathbb{R}^{21}$) contains ego information, obstacle information and destination information. We adopt position control to move the manipulator end-effector, i.e., the action ($\mathcal{A} \in \mathbb{R}^3$) is the increments on X-Y-Z axis. 
	\end{minipage}\\
		 \hline
				\begin{minipage}[b]{0.275\columnwidth}
		 \vspace{0.15cm}
		\centering
		(f) SafeDrive\\
		\raisebox{0\height}{\includegraphics[width=0.8\linewidth,height=2.5cm]{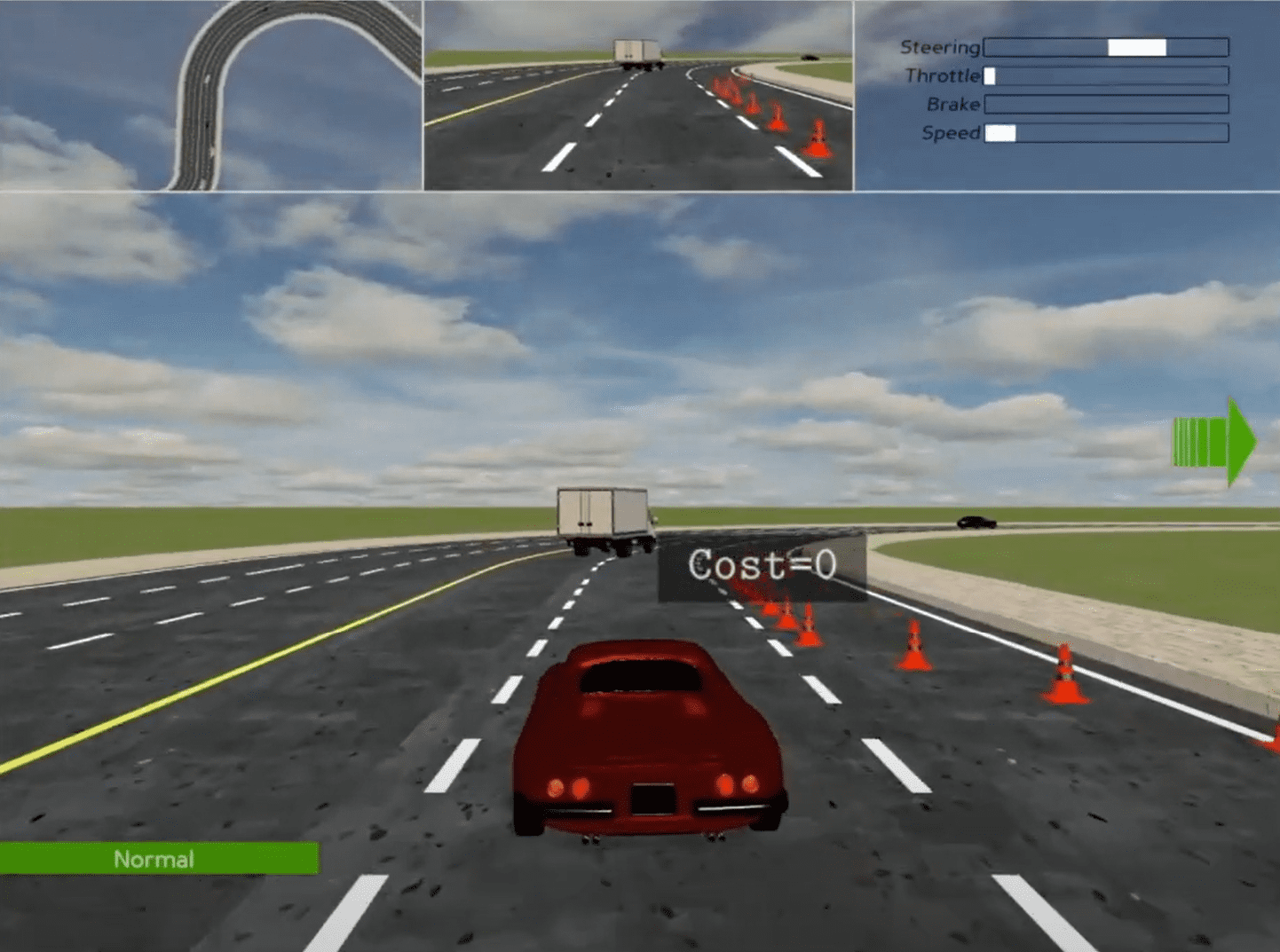}}
		
	\end{minipage} & 		 \begin{minipage}[b]{0.625\columnwidth}
	\vspace{0.15cm}
   Refer to~\citet{li2021metadrive}.  Metadrive is a compositional, lightweight and  realistic platform for vehicle autonomy. Most importantly, it provides pre-defined environments for safe policy learning in autopilots. Concretely, the observation is encoded by a vector containing ego-state, navigation information and surrounding information detected by the Lidar. We control the speed and steering of the car to hit virtual land markers for rewards (By default, \citet{li2021metadrive} conduct a reward shaping by penalizing collisions and overstepping the road; otherwise it would be too hard to learn), and the cost function is +1 if the vehicle collides with other obstacles or it is out of the road.
	\end{minipage}\\
		 \hline
\end{longtable}

\subsection{B.2 Safe Learning Algorithms}

\paragraph{Safety Projection}
This type of method corrects the initial unsafe decision by projecting it back to the safe set. In SafeRL-Kit, the representative model-free method is Safety Layer which is added on the top of the original policy network. Specifically, Safety Layer utilizes a parametric linear model
    $C(s_t,a_t) \approx g(s_t;\omega)^\top a_t + c_{t-1}$
to approximate the single-step cost function with supervised training and solves the quadratic programming as follows
\begin{equation}
\begin{aligned} \label{SafetyLayer}
a_t^* = &\ {\arg\min}_{a}\ \frac{1}{2}|| a - \mu_\theta(s)||^2\\
&\  \mathrm{s.t.} \quad g(s_t;\omega)^\top a_t + c_{t-1} \le \epsilon,
\end{aligned}
\end{equation}
to find the "closest" action to the feasible region. 
Since there is only one compositional cost signal in our problem, the closed-form solution of problem~\eqref{SafetyLayer} is
\begin{equation}
\small
        a_t^* = \mu_\theta(s_t) - \bigg[\frac{g(s_t;\omega)^\top\mu_\theta(s) + c_{t-1} - \epsilon}{g(s_t;\omega)^\top g(s_t;\omega)} \bigg]^+ g(s_t;\omega)
\end{equation}
By the way, the $g_\omega$ is trained from offline data in~\citet{dalal2018safe}. In SafeRL-Kit, we instead learn the linear model with the policy network synchronously, considering the side-effect of distribution shift. We also employ a warm-up for the safety critic in the training process to avoid inaccurate estimation and wrong corrections.

\paragraph{Safety Recovery}
 The critical insight behind safety recovery is to introduce an additional policy that recovers potential unsafe states. In SafeRL-Kit, the representative model-free method is Recovery RL~\cite{thananjeyan2021recovery}. We first learn a safe critic to estimate the future probability of constraint violation as
\begin{equation}\label{Q_risk}
    Q^\pi_\text{risk}(s_t,a_t) = c_t + (1-c_t) \gamma \mathbb{E}_{\pi} Q^\pi_\text{risk}(s_{t+1},a_{t+1}).
\end{equation}
This formulation is slightly different from the standard Bellman equation since it assumes the episode terminates when the agent receives a cost signal. We remove the early-stopping condition for agents to better master complex skills but still preserve the original formulation of $Q^\pi_\text{risk}$ in~\eqref{Q_risk} since it limits the upper bound of the safe critic and eliminates the over-estimation in Q-learning. 
In the phase of policy execution, the recovery policy takes over the control when the predicted value of the safe critic exceeds the given threshold:
\begin{equation}
a_t = 
        \begin{cases}
            \pi_\text{task}(s_t),& \text{if } Q^\pi_\text{risk}\big(s_t,\pi_\text{task}(s_t)\big)\leq \delta\\
            \pi_\text{risk}(s_t),  & \text{otherwise}
        \end{cases}.
\end{equation}
It is of the best practice to store $a_\text{task}$ and $a_\text{risk}$ simultaneously in the replay buffer, and utilize them to train $\pi_\text{task}$ and $\pi_\text{risk}$ respectively in Recovery RL. This technique ensures that $\pi_\text{task}$ can learn from the new MDP, instead of proposing same unsafe actions continuously. Similar to Safety Layer, Recovery RL also has a warm-up stage where $Q^\pi_\text{risk}$ is trained but is not utilized in SafeRL-Kit.

\paragraph{Safety Optimization}
State-wise safe safe RL can be formulated as a constrained sequential optimization problem
\begin{equation}
    a^* = \mathop{\arg\max}_{a} \big[Q^\pi(s,a) \big] \quad \mathrm{s.t.} \ \ Q^\pi_c(s,a) \leq \delta,
    \label{infer2}
\end{equation}
and can be tackled via the dual problem in the  parametric space as follows
\begin{equation}\label{LAG}
\small
    \mathop{\max}_{\lambda \geq 0} \mathop{\min}_{\theta} \mathbb{E}_{\mathcal{D}} -Q^\pi(s,\pi_\theta(s)) + \lambda \big(Q^\pi_{c}(s,\pi_\theta(s)) - \epsilon \big).
\end{equation}
Off-policy Lagrangian applies stochastic primal-dual optimization~\cite{luenberger1984linear} to update primal and dual variables alternatively, which follows as
\begin{equation}\label{pd}
    \small
        \begin{cases}
            \theta \leftarrow \theta + \eta_\theta \nabla_\theta \mathbb{E}_{\mathcal{D}}\big( Q^\pi(s,\pi_\theta(s)) - \lambda Q^\pi_{c}(s,\pi_\theta(s))  \big)\\
            \lambda \leftarrow \big[ \lambda + \eta_\lambda \mathbb{E}_{\mathcal{D}}\big(Q^\pi_{c}(s,\pi_\theta(s)) - \epsilon \big) \big]^+
        \end{cases}
\end{equation}
Notably, the timescale of primal variable updates is required to be faster than the timescale of Lagrange multipliers. Thus, we set $\eta_\theta \gg \eta_\lambda$ in SafeRL-Kit.

The constraint in Off-policy Lagrangian is based on the expectation of the safety critic. Feasible Actor-Critic (FAC)~\cite{ma2021feasible} introduces state-wise constraints for each "feasible" initial states and reformulates~\eqref{LAG} as
\begin{equation}\label{FAC}
\small
    \mathop{\max}_{\lambda \geq 0} \mathop{\min}_{\theta} \mathbb{E}_{\mathcal{D}} -Q^\pi(s,\pi_\theta(s)) + \lambda(s) \big(Q^\pi_{c}(s,\pi_\theta(s)) - \epsilon \big).
\end{equation}
The distinctiveness of problem~\eqref{FAC} is there are infinitely many Lagrangian multipliers that are state-dependent. In SafeRL-Kit, we employ a neural network $\lambda(s;\xi)$ activated by \emph{Softplus} function to map the given state $s$ to its corresponding Lagrangian multiplier $\lambda(s)$. The primal-dual ascents of policy network is similar to~\eqref{pd}; the updates of multiplier network is given by
\begin{equation}
    \xi \leftarrow \xi + \eta_\xi \nabla_\xi  \mathbb{E}_{\mathcal{D}} \lambda(s;\xi) \big(Q^\pi_{c}(s,\pi_\theta(s)) - \epsilon \big).
\end{equation}
Besides, we set a different interval schedule $m_\pi$ (for $\pi_\theta$ delay steps) and $m_\lambda$  (for $\lambda_\xi$ delay steps) in SafeRL-Kit to stabilize the training process inspired by Fujimoto et al.~\shortcite{fujimoto2018addressing}.

\section{Supplementary Material C: Experimental Details}
\subsection{C.1 Hyper-parameter Settings}
\begin{table}[H]
\caption{Hyper-parameters of different safety-aware algorithms in SafeRL-Kit.}
\label{tab:hyper}
\begin{center}
\begin{small}
\begin{sc}
\resizebox{\textwidth}{45mm}{
\begin{tabular}{|l|p{2.25cm}<{\centering}p{2.25cm}<{\centering}p{2.25cm}<{\centering}p{2.25cm}<{\centering}p{2.25cm}<{\centering}|}
\toprule
 Hyper-parameter& Safety Layer &  Recovery RL & Lagrangian & FAC & USL(ours)\\
\midrule
\specialrule{0em}{1pt}{1pt}
Cost Limit $\delta$ & 0.1 & 0.1  & 0.1 & 0.1 & 0.1\\
\specialrule{0em}{1pt}{1pt}
Reward Discount & 0.99 & 0.99  & 0.99 & 0.99 & 0.99\\
\specialrule{0em}{1pt}{1pt}
Cost Discount & 0.99 & 0.99  & 0.99 & 0.99 & 0.99\\
\specialrule{0em}{1pt}{1pt}
Warm-up Ratio & 0.2 & 0.2  & N/A & N/A & N/A\\
\specialrule{0em}{1pt}{1pt}
Batch Size & 256 &  256   &  256  &  256  & 256 \\
\specialrule{0em}{1pt}{1pt}
Critic LR & 3E-4& 3E-4& 3E-4& 3E-4& 3E-4\\
\specialrule{0em}{1pt}{1pt}
Actor LR & 3E-4& 3E-4& 3E-4& 3E-4& 3E-4\\
\specialrule{0em}{1pt}{1pt}
Safe Critic LR & 3E-4& 3E-4& 3E-4& 3E-4& 3E-4\\
\specialrule{0em}{1pt}{1pt}
Safe Actor LR &  N/A& 3E-4&  N/A&  N/A&  N/A\\
\specialrule{0em}{1pt}{1pt}
Multiplier LR &  N/A & N/A & 1E-5& 1E-5&  N/A \\
\specialrule{0em}{1pt}{1pt}
Multiplier Init &  N/A & N/A & 0.0 & N/A &  N/A \\
\specialrule{0em}{1pt}{1pt}
Policy Delay & 2 &  2   &  2  &  2  & 2 \\
\specialrule{0em}{1pt}{1pt}
Multiplier Delay &  N/A &   N/A   &   N/A  &  12  &  N/A\\
\specialrule{0em}{1pt}{1pt}
Penalty Factor $\kappa$ &  N/A &   N/A   &   N/A    &  N/A & 5\\
\specialrule{0em}{1pt}{1pt}
Iterative Step $K$ &  N/A &   N/A   &   N/A    &  N/A & 20\\
\bottomrule
\end{tabular}
}
\end{sc}
\end{small}
\end{center}
\end{table}

\subsection{C.2 Additional Sensitivity Study between Lagrangian, FAC and USL}

In this section, we study the sensitivity to hyper-parameters of Lagrangian-based methods and newly proposed USL in Figure~\ref{fig:learing_curves_sa1} and Figure~\ref{fig:learing_curves_sa2}, respectively. We found that Lagrangian-based methods are susceptible to the learning rate of the Lagrangian multiplier(s) in stochastic primal-dual optimization. First, the oscillating $\lambda$ causes non-negligible deviations in the learning curves. Besides, the increasing $\eta_\lambda$ may degrade the performance dramatically. The phenomenon is especially pronounced in FAC, which has a multiplier network to predict the state-dependent $\lambda(s;\xi)$. Thus, we suggest $\eta_\lambda \ll \eta_\theta$ in practice. As for USL, we find if the penalty factor $\kappa$ is too small, the cost return may fail to converge. Nevertheless, if $\kappa$ is sufficiently large, the learning curves are robust and almost identical. Thus, we suggest $\kappa > 5$ in experiments and a grid search for better performance.

\begin{figure}[H]
      \centering
    \hspace{0.5cm}\includegraphics[width=0.85\linewidth,trim=0 0 0 0,clip]{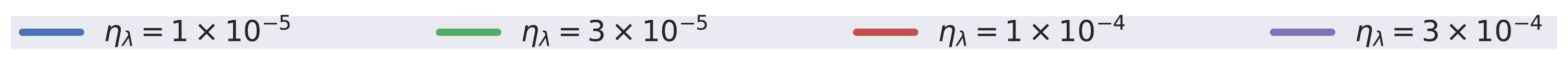}\vspace{0.1cm}\\
    \subcaptionbox{Reward-Lag-SpeedLimit}
        {\includegraphics[width=0.23\linewidth,trim=20 20 0 0,clip]{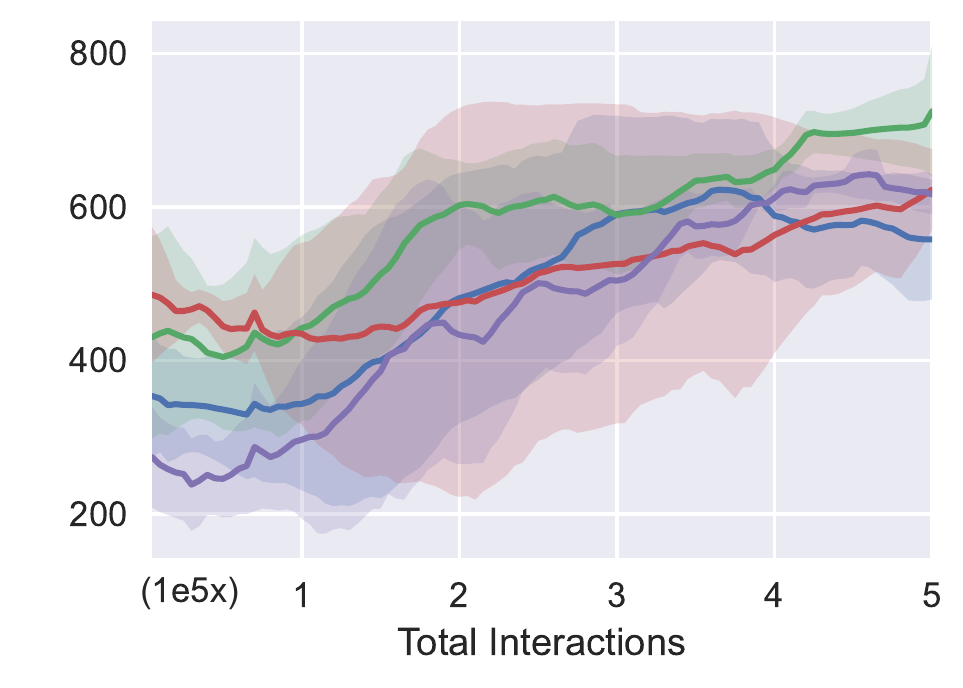}}
       \subcaptionbox{Cost-Lag-SpeedLimit}
        {\includegraphics[width=0.23\linewidth,trim=20 20 0 0,clip]{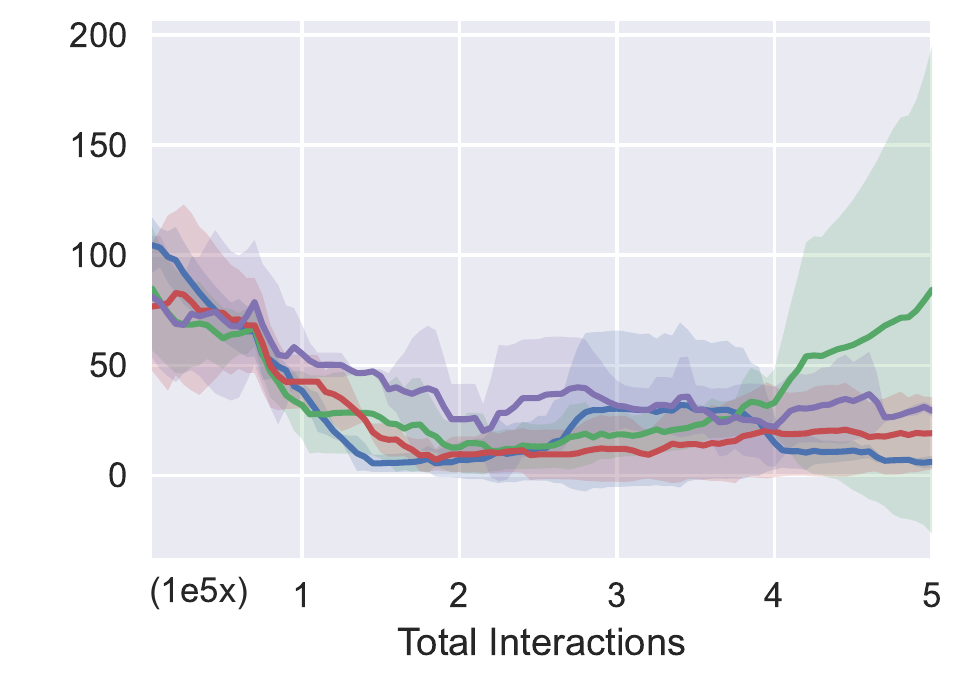}}
      \subcaptionbox{Reward-FAC-MetaDrive}
        {\includegraphics[width=0.23\linewidth,trim=20 20 0 0,clip]{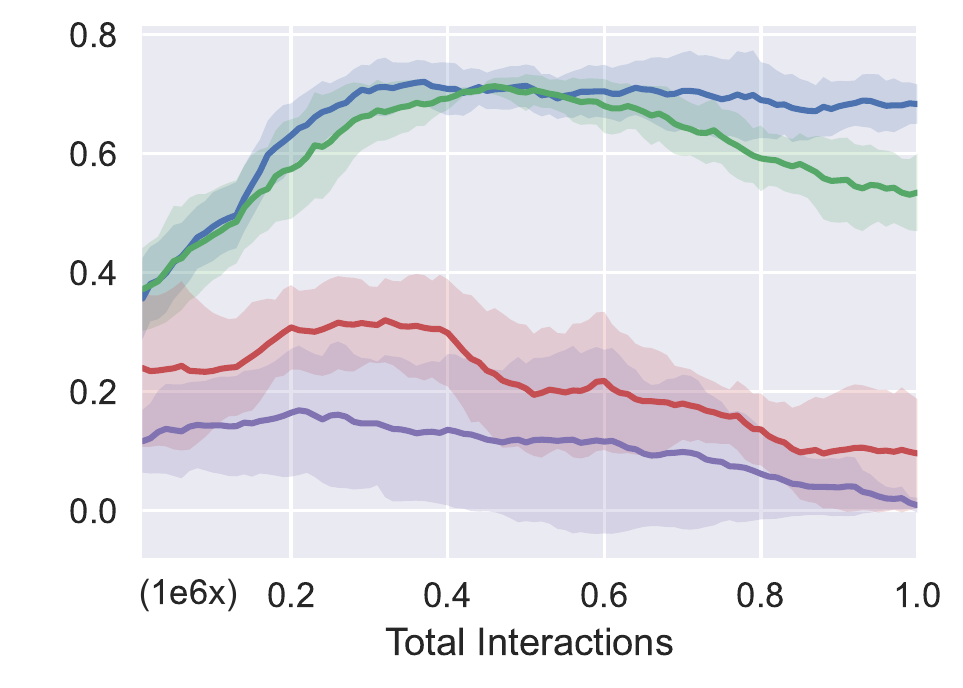}}
    \subcaptionbox{Cost-FAC-MetaDrive}
        {\includegraphics[width=0.23\linewidth,trim=20 20 0 0,clip]{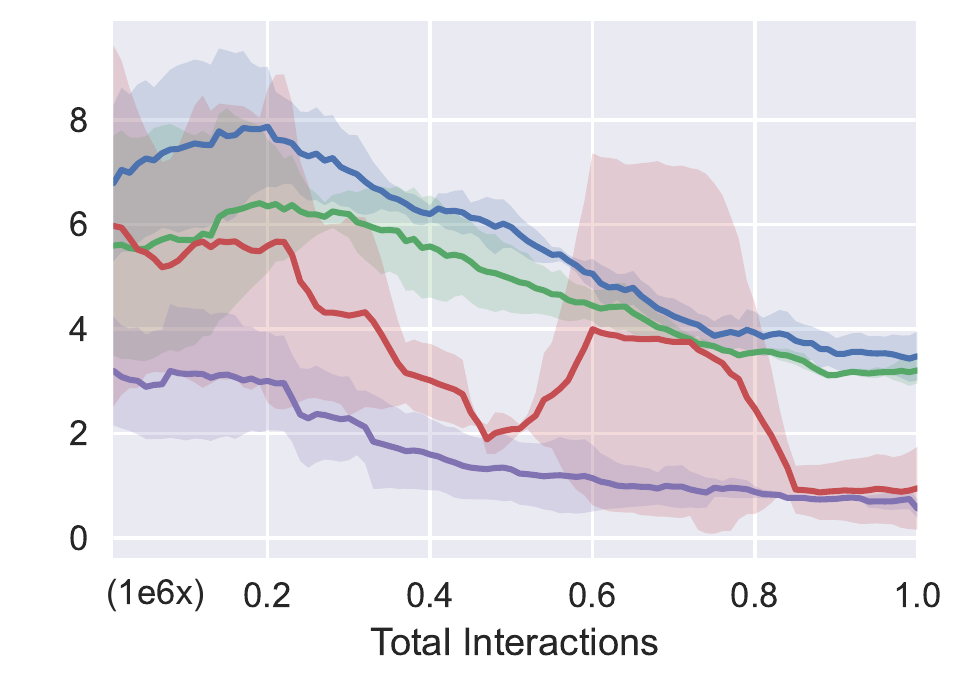}}
      \caption{Sensitivity study of Lagrangian-based methods. The first two figure are reward and cost plots of Off-policy Lagrangian on Car-SpeedLimit task with different $\lambda$ learning rates. The last two figure are success rate and cost plots of Feasible Actor-Critic on MetaDrive benchmark with different $\lambda(s;\xi)$ learning rates.}
      \label{fig:learing_curves_sa1}
      
\end{figure}

\begin{figure}[H]
      \centering
    \hspace{0.5cm}\includegraphics[width=0.85\linewidth,trim=0 0 0 0,clip]{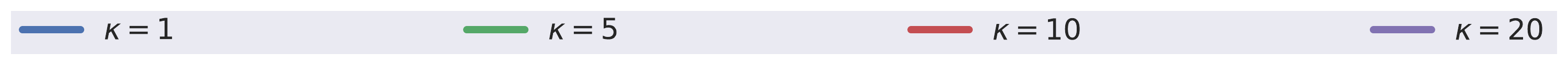}\vspace{0.1cm}\\
 \subcaptionbox{Reward-USL-SpeedLimit}
        {\includegraphics[width=0.23\linewidth,trim=20 20 0 0,clip]{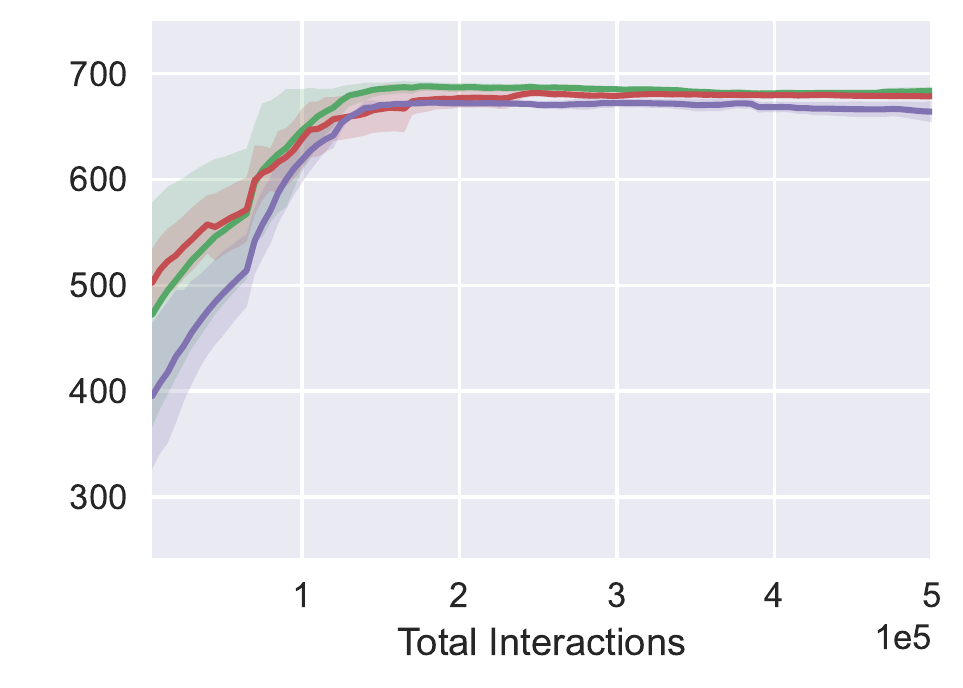}}
       \subcaptionbox{Cost-USL-SpeedLimit}
        {\includegraphics[width=0.23\linewidth,trim=20 20 0 0,clip]{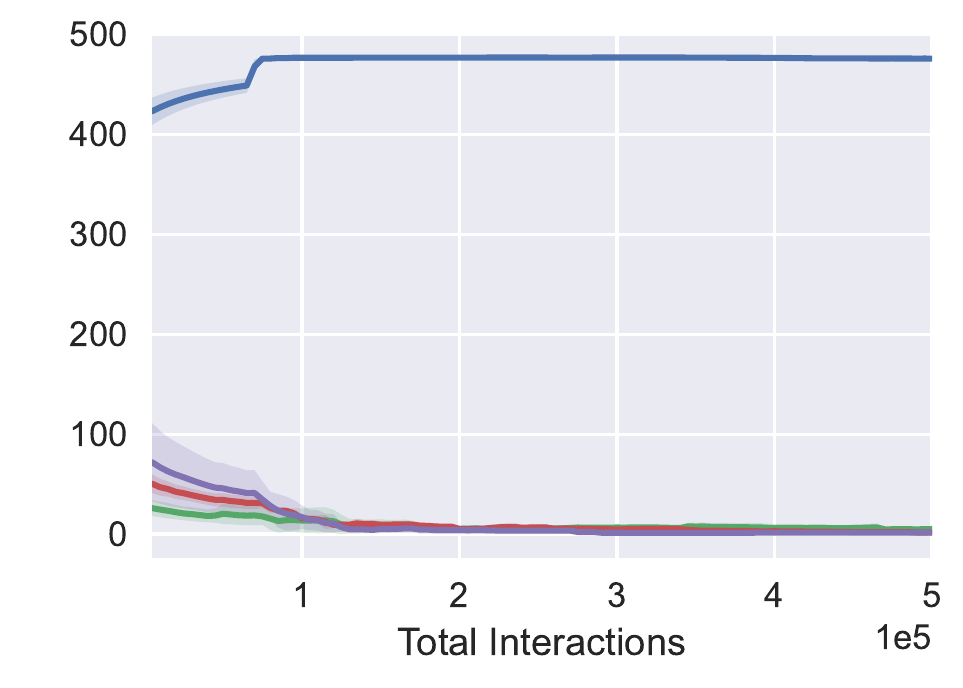}}
      \subcaptionbox{Reward-USL-MetaDrive}
        {\includegraphics[width=0.23\linewidth,trim=20 20 0 0,clip]{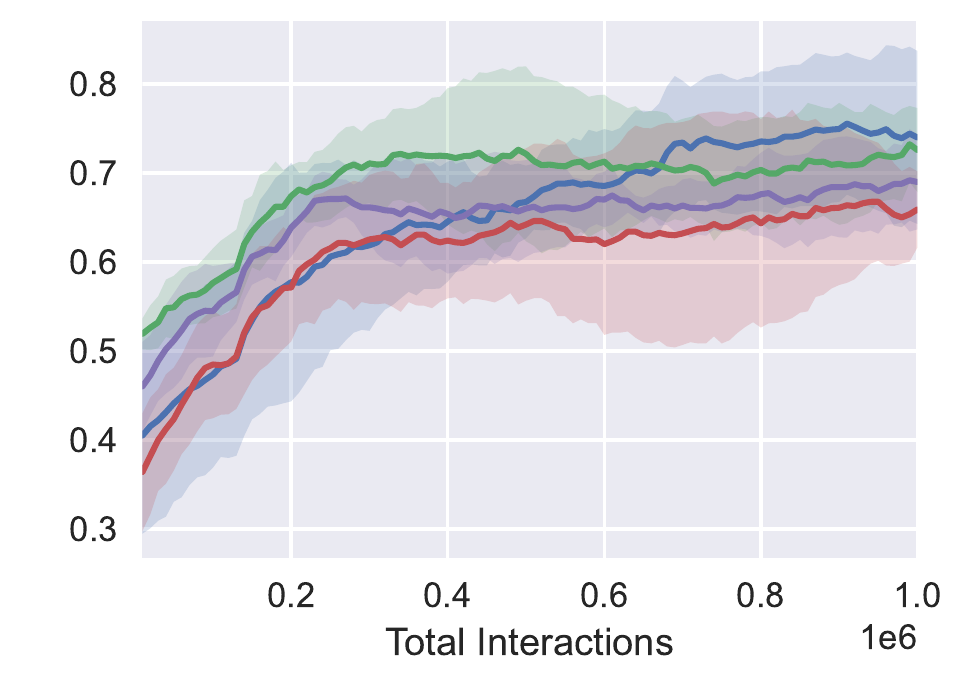}}
    \subcaptionbox{Cost-USL-MetaDrive}
        {\includegraphics[width=0.23\linewidth,trim=20 20 0 0,clip]{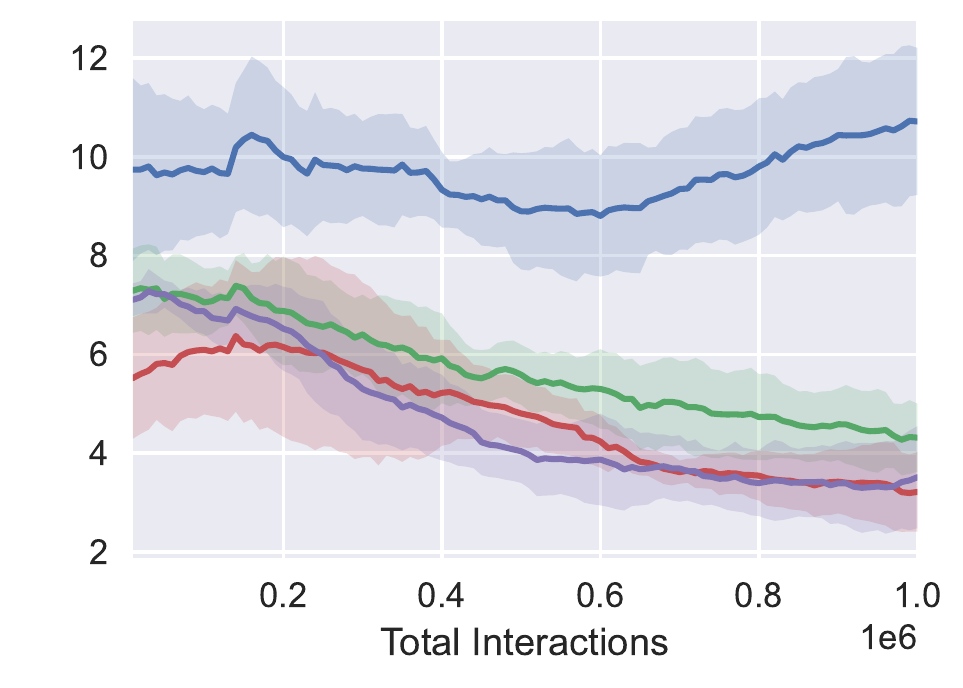}}
      \caption{Sensitivity study of USL. The first two figure are reward and cost plots of USL on the Car-SpeedLimit task with different penalty factors $\kappa$. The last two figure are the success rate and cost plots of USL on the MetaDrive benchmark with different penalty factors $\kappa$.}
      \label{fig:learing_curves_sa2}
\end{figure}

\subsection{C.3 Additional Comparative Study between State-wise Safe RL, Episodic Safe RL and Reward Shaping}

In Section 1: Introduction, we claim that ``Penalizing unsafe transitions on the reward function (i.e., $r' = r-\sigma\cdot c$) is straightforward but sometimes cumbersome to navigate the trade-off between performance and safety.'' In this section, we empirically demonstrate that by changing punishment intensities of the reward shaping method on the Stabilization task.
\begin{figure}[H]
      \centering
    \hspace{0.5cm}\includegraphics[width=0.85\linewidth,trim=0 0 0 0,clip]{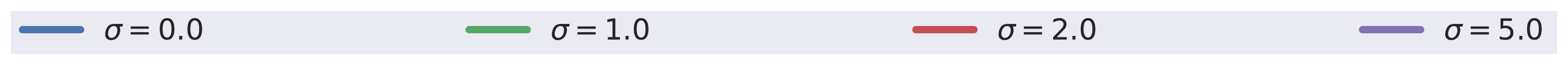}\vspace{0.1cm}\\
 \subcaptionbox{Reward-Stabilization}
        {\includegraphics[width=0.23\linewidth,trim=20 20 0 0,clip]{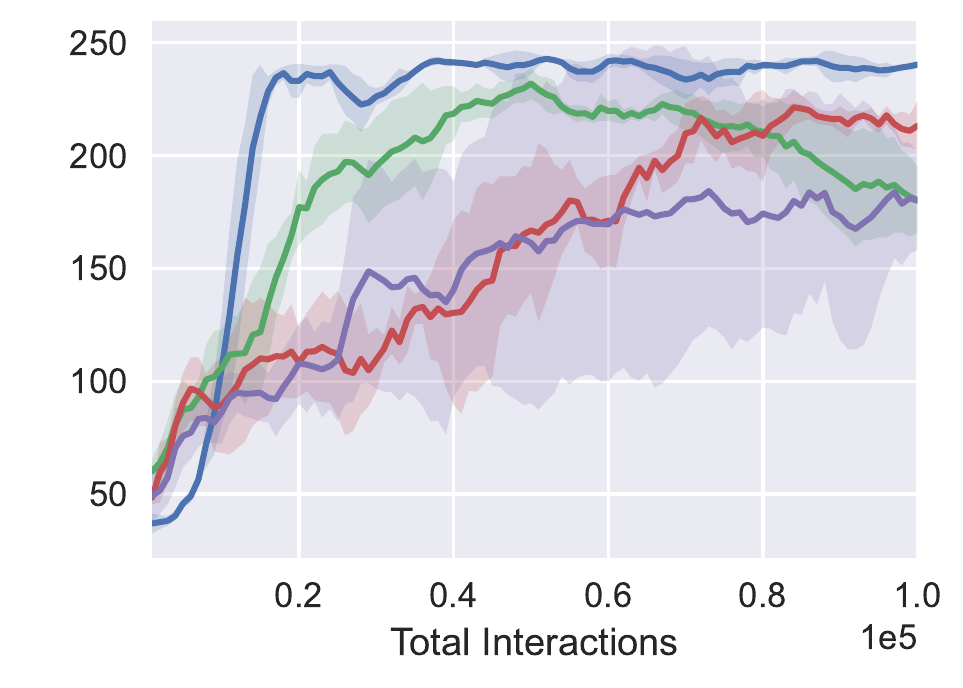}}
       \subcaptionbox{Cost-Stabilization}
        {\includegraphics[width=0.23\linewidth,trim=20 20 0 0,clip]{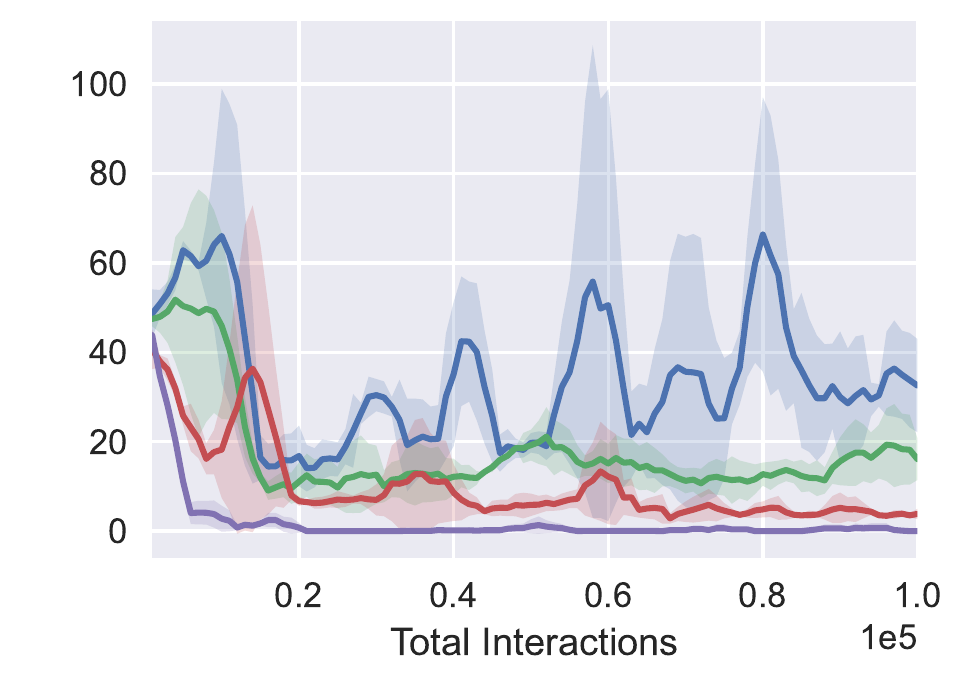}}
      \subcaptionbox{CostRate-Stabilization}
        {\includegraphics[width=0.23\linewidth,trim=20 20 0 0,clip]{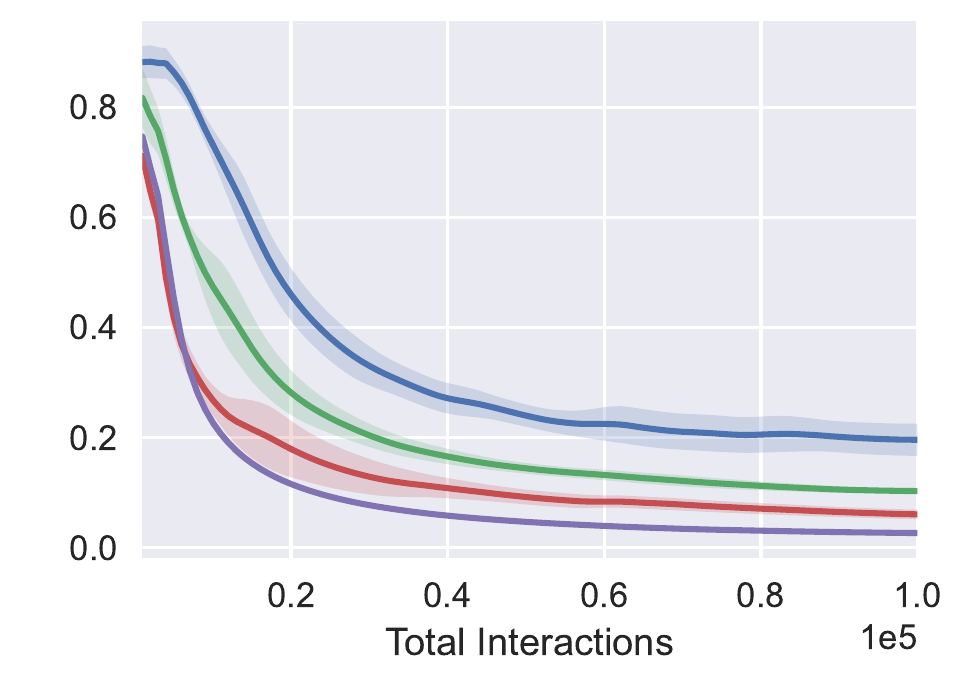}}
      \caption{Comparative study of different punishment intensities of the reward shaping method on the Stabilization task.}
      \label{fig:learing_curves_compare3}
\end{figure}

In Section 3: Revisit RL toward Safety-critical Tasks, we claim that ``In many safety-critical scenarios, the final policy is supposed to maintain the zero-cost return since any inadmissible behavior could lead to catastrophic failure in the execution. Prior constrained learning paradigms have fatal flaws under this premise. For the episode constraint, if we set the threshold $d$ close to $0$, the agent either fails to improve policy or receives a cost-return more significant than $0$." In this section, we empirically demonstrate that episodic safe RL may not work toward safety-critical tasks when we set $d\rightarrow0$ in the corresponding constraint $J_c(\pi) \leq d$. We perform a comparative study on the Stabilization task and take CPO, PPO-L, TRPO-L~\cite{ray2019benchmarking} as episodic baselines. The results show that CPO/PPO-L/TRPO-L fails to improve policy and receives a cost-return more significant than $0$ when we set $d=0$. Instead, the state-wise safe RL method (we take USL as an example) can obtain a reasonable policy while adhering to zero-cost return in the end.

\begin{figure}[H]
      \centering
    \hspace{0.5cm}\includegraphics[width=0.85\linewidth,trim=0 0 0 0,clip]{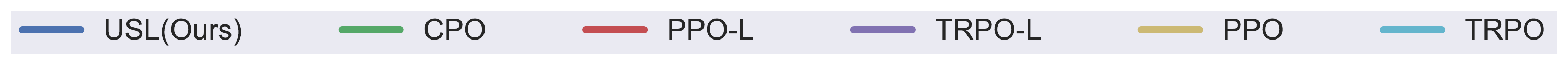}\vspace{0.1cm}\\
 \subcaptionbox{Reward-Stabilization}
        {\includegraphics[width=0.23\linewidth,trim=20 20 0 0,clip]{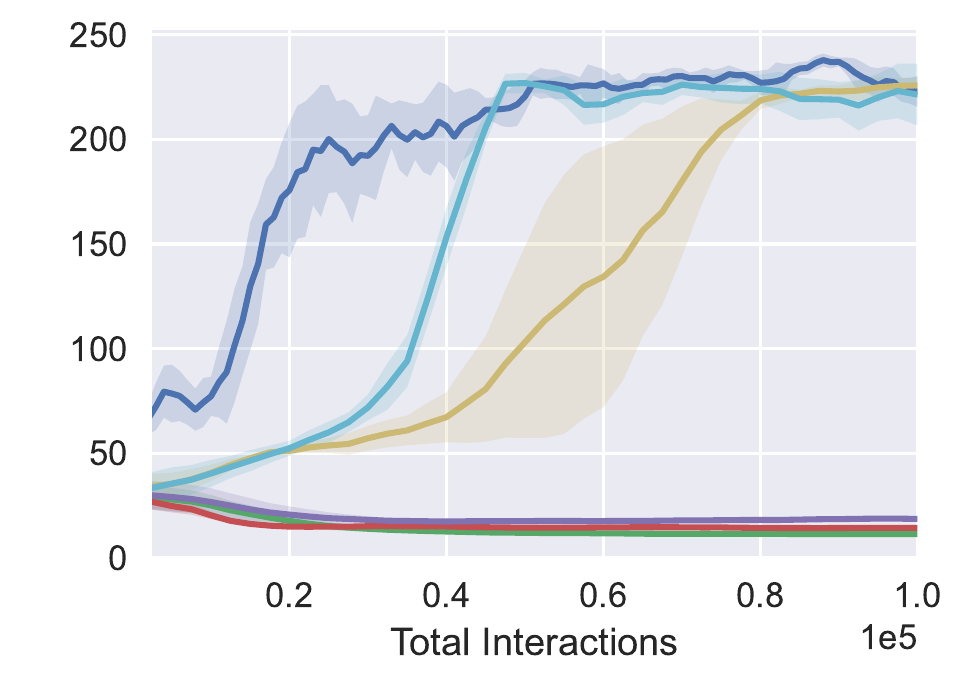}}
       \subcaptionbox{Cost-Stabilization}
        {\includegraphics[width=0.23\linewidth,trim=20 20 0 0,clip]{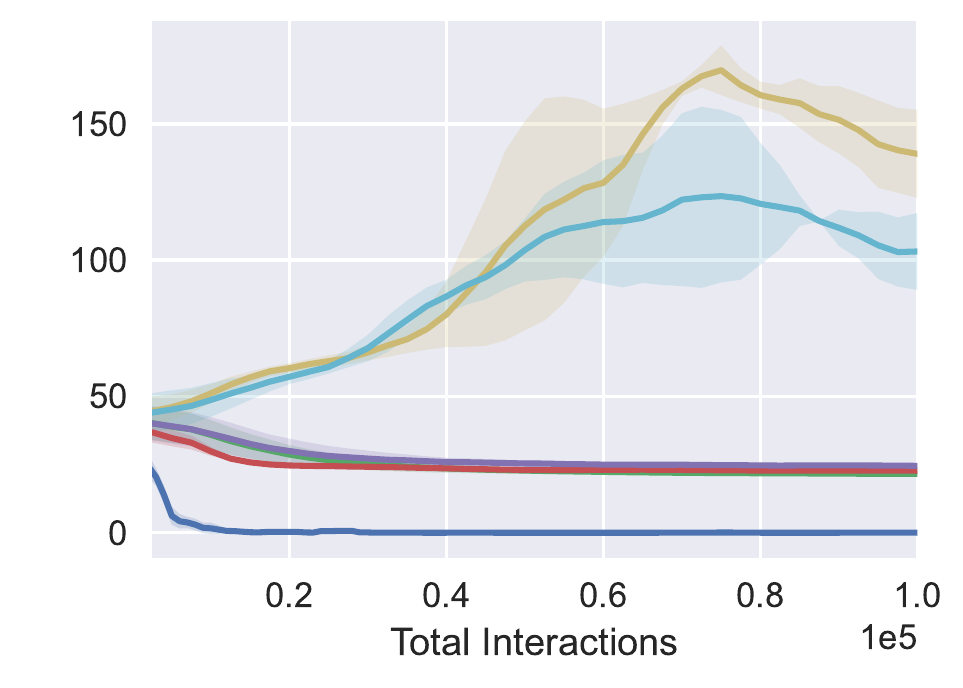}}
      \subcaptionbox{CostRate-Stabilization}
        {\includegraphics[width=0.23\linewidth,trim=20 20 0 0,clip]{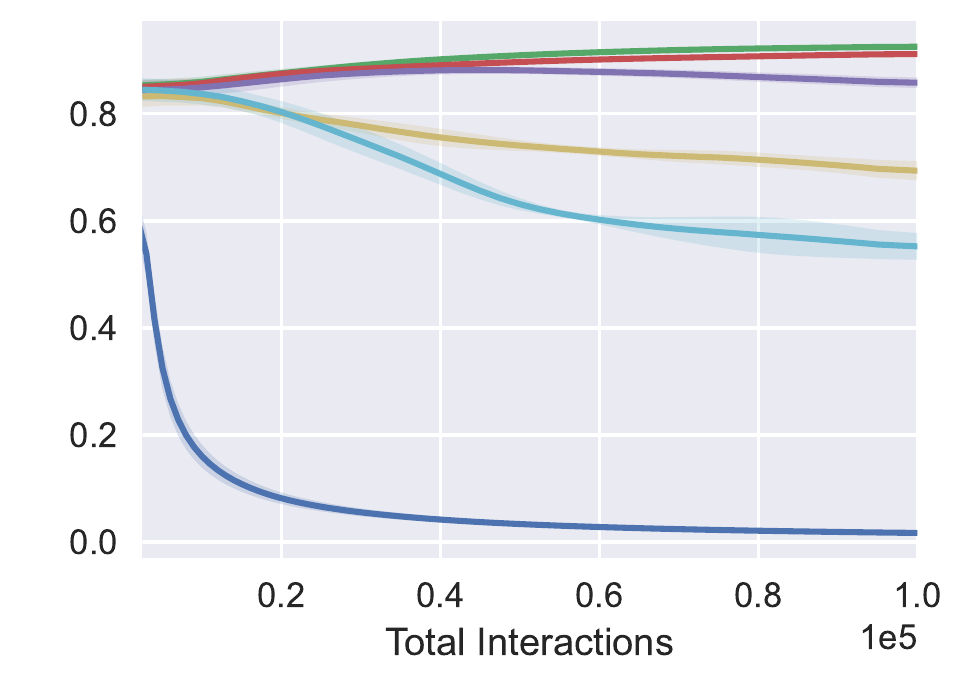}}
      \caption{Comparative study between state-wise safe RL (USL) and episodic safe RL (CPO/PPO-L/TRPO-L) on the Stabilization task.}
      \label{fig:learing_curves_compare1}
\end{figure}

In Section 3: Revisit RL toward Safety-critical Tasks, we claim that ``Empirical results demonstrate safe RL methods adhering to state-wise safety constraints are robust to $\delta$ value.'' In this section, we empirically demonstrate that change $\delta$ from 0.1 to 1 (namely, change safe planning span from 100 to 1) won't affect the learning curves too much. However, when we set $\delta=1$, it degenerates to the instantaneous safety constraint, and the agent cannot obtain a zero-cost return policy at convergence.


\begin{figure}[H]
      \centering
    \hspace{0.5cm}\includegraphics[width=0.85\linewidth,trim=0 0 0 0,clip]{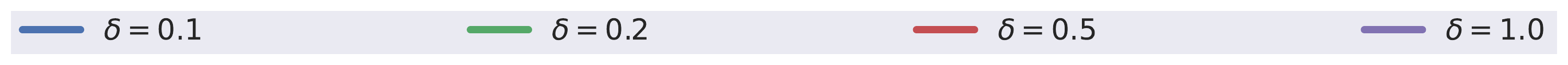}\vspace{0.1cm}\\
 \subcaptionbox{Reward-Stabilization}
        {\includegraphics[width=0.23\linewidth,trim=20 20 0 0,clip]{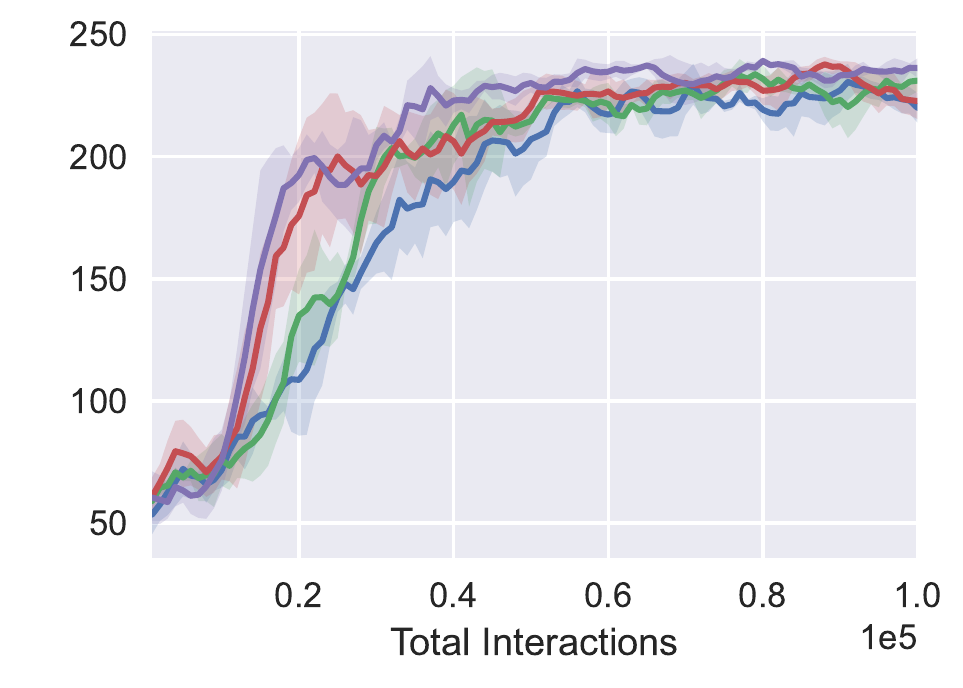}}
       \subcaptionbox{Cost-Stabilization}
        {\includegraphics[width=0.23\linewidth,trim=20 20 0 0,clip]{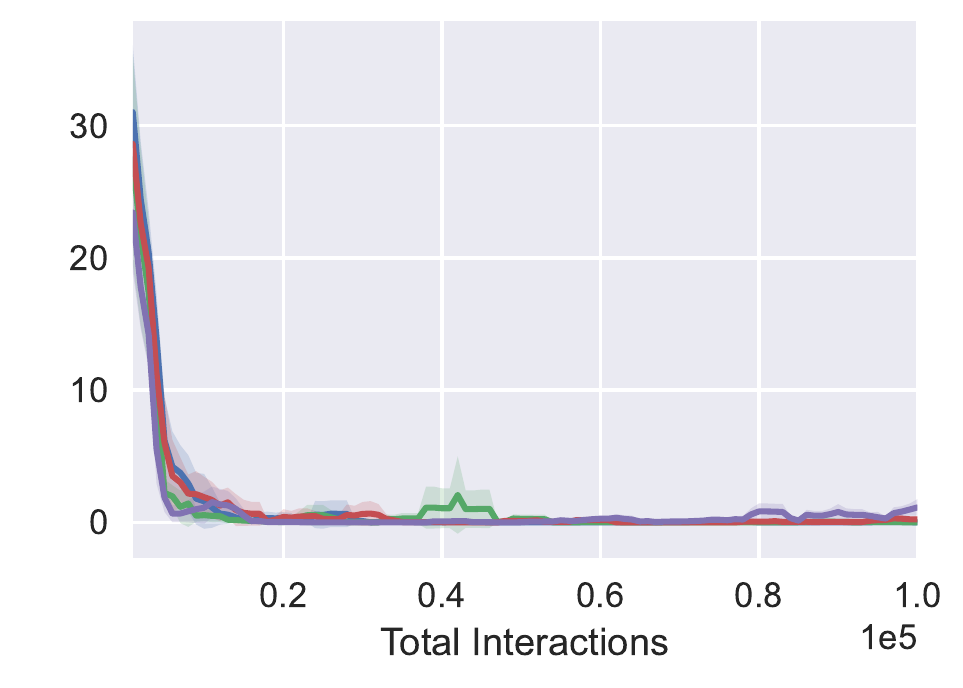}}
      \subcaptionbox{CostRate-Stabilization}
        {\includegraphics[width=0.23\linewidth,trim=20 20 0 0,clip]{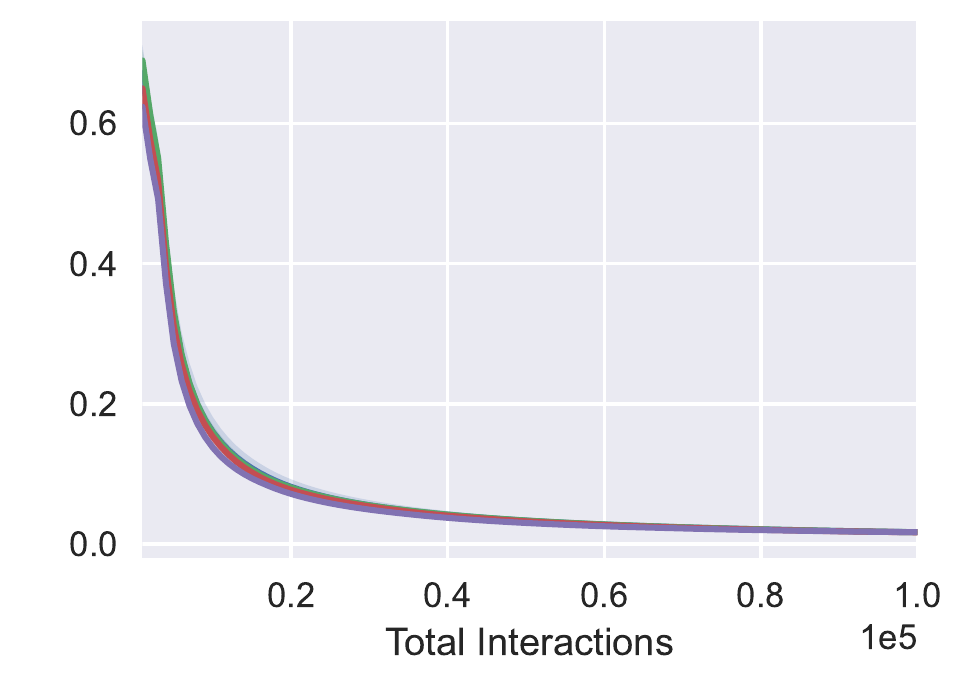}}
      \caption{
      Comparative study of different safe planning span settings in state-wise safe RL (USL) on the Stabilization task.}
      \label{fig:learing_curves_compare2}
\end{figure}

\end{document}